\documentclass{article} %
\usepackage{iclr2027_conference,times}

\usepackage{amsmath,amsfonts,bm}

\def\eqref#1{equation~\ref{#1}}

\def\1{\bm{1}}

\DeclareMathAlphabet{\mathsfit}{\encodingdefault}{\sfdefault}{m}{sl}
\SetMathAlphabet{\mathsfit}{bold}{\encodingdefault}{\sfdefault}{bx}{n}

\newcommand{\R}{\mathbb{R}}

\usepackage{url}

\definecolor{mydarkred}{rgb}{0.6,0,0}
\definecolor{myblue}{HTML}{268BD2}

\usepackage{graphicx}
\usepackage[utf8]{inputenc} %
\usepackage[T1]{fontenc}    %
\usepackage{url}            %
\usepackage{booktabs}       %
\usepackage{amsfonts}       %
\usepackage{nicefrac}       %
\usepackage{microtype}      %
\usepackage{xcolor}         %
\usepackage{color}
\usepackage{multirow}
\usepackage{wrapfig}
\usepackage{mathtools}
\usepackage{subcaption}
\usepackage{amsmath}
\usepackage{amsthm}
\usepackage{algorithm}
\usepackage{colortbl}
\usepackage{adjustbox}
\usepackage{cuted}
 \usepackage{multicol}

\newtheorem{theorem}{Theorem}

\definecolor{textpurple}{HTML}{883BDD} 
\RequirePackage[dvipsnames]{xcolor}

\usepackage{enumitem}

\usepackage{pifont}

\usepackage[most]{tcolorbox}

\usepackage{tcolorbox}
\usepackage{listings}
\tcbuselibrary{breakable, skins}

\usepackage{algorithm}
\usepackage{algpseudocode}

\usepackage[colorlinks,
linkcolor=mydarkred,
urlcolor=RoyalBlue,
citecolor=myblue]{hyperref}
\usepackage{cleveref}
\usepackage{subcaption}
\usepackage{makecell}

\usepackage{listings,xcolor}
\definecolor{algcmt}{HTML}{287E7E}   %
\definecolor{algkw}{HTML}{1010EE}    %
\definecolor{algfn}{HTML}{D6336C}    %
\definecolor{algstr}{HTML}{E8860C}   %
\lstdefinestyle{pyalg}{
  language=Python,
  basicstyle=\fontsize{8.5pt}{10pt}\ttfamily,
  keywordstyle=\color{algkw},
  commentstyle=\color{algcmt},
  stringstyle=\color{algstr},
  emph={mean,sum,max,dot,norm,inv,relu,modality,damage,
        sum_h,sum_i,sum_a,mean_l},
  emphstyle=\color{algfn},
  alsoletter={_},
  literate={+}{{{\color{algkw}+}}}1
           {-}{{{\color{algkw}-}}}1
           {*}{{{\color{algkw}*}}}1
           {/}{{{\color{algkw}/}}}1,
  columns=fullflexible,
  keepspaces=true,
  showstringspaces=false,
  xleftmargin=.5em,
}

\newtcolorbox{promptbox}[1]{
    colback=cyan!0!white,
    colframe=cyan!60!black,
    fonttitle=\bfseries,
    title=#1,
    breakable,
    boxrule=0.8pt,
    arc=3pt,
    left=6pt, right=6pt, top=4pt, bottom=4pt
}

\lstdefinestyle{promptstyle}{
    basicstyle=\ttfamily\small,
    breaklines=true,
    backgroundcolor=\color{cyan!8!white},
    frame=none,
    xleftmargin=0pt,
    columns=flexible,
    keepspaces=true,
    literate={\\boxed}{\textbackslash boxed}6
             {<think>}{\textlangle think\textrangle}7
             {</think>}{\textlangle /think\textrangle}8
             {<answer>}{\textlangle answer\textrangle}8
             {</answer>}{\textlangle /answer\textrangle}9
}

\title{Tracing the Evidence: Faithful Token Attribution Through Vision-Language Reasoning}

\author{
Bowen Yuan$^{*}$, Danny Wang$^{*}$, Ruihong Qiu, Zijian Wang, Zi Huang\\
The University of Queensland\\
\texttt{\{bowen.yuan,danny.wang,r.qiu,zijian.wang,helen.huang\}@uq.edu.au}
}

\iclrfinalcopy %
\begin{document}

\maketitle
\lhead{Preprint.}
{\renewcommand{\thefootnote}{*}\footnotetext{Equal contribution.}}

\begin{abstract}
\vspace{-8pt}

Large vision-language models (LVLMs) exhibit strong reasoning capabilities, yet the visual and textual evidence supporting the generated responses remains difficult to identify.
Faithful token attribution explains an LVLM's response by assigning scores that rank image and prompt tokens by how much the model relies on them, such that removing higher-ranked tokens causes the likelihood of the generated response to drop more rapidly.
However, existing token-attribution methods have been developed mainly for text-based language models, and our empirical study reveals two challenges when complex multimodal sources are involved.
First, the joint image-text attribution can underrepresent visual evidence relative to text, obscuring the image regions supporting the response.
Second, visual evidence may influence the generated response through multiple intermediate reasoning paths, while existing methods trace only a limited subset of these paths, causing important visual contributions to be underestimated.
Motivated by these insights, we introduce \textsc{VTrace}, a multimodal token-attribution framework that traces input contributions through intermediate reasoning and calibrates attribution scores across modalities.
\textsc{VTrace} constructs pairwise attributions that highlight token-specific contributions and aggregates all forward attribution paths in closed form to account for both direct and indirect contributions.
Cross-modal calibration then rescales image and text attribution scores using modality contributions estimated from response-likelihood changes, enabling a unified ranking of input tokens.
Evaluations against seven baselines across six visual reasoning benchmarks demonstrate the superior attribution faithfulness.
Project page: \url{https://vtrace-attribution.github.io/}.
\end{abstract}

\vspace{-0.1cm}
\begin{center}
\vspace{-0.3cm}
\includegraphics[width=0.88\linewidth]{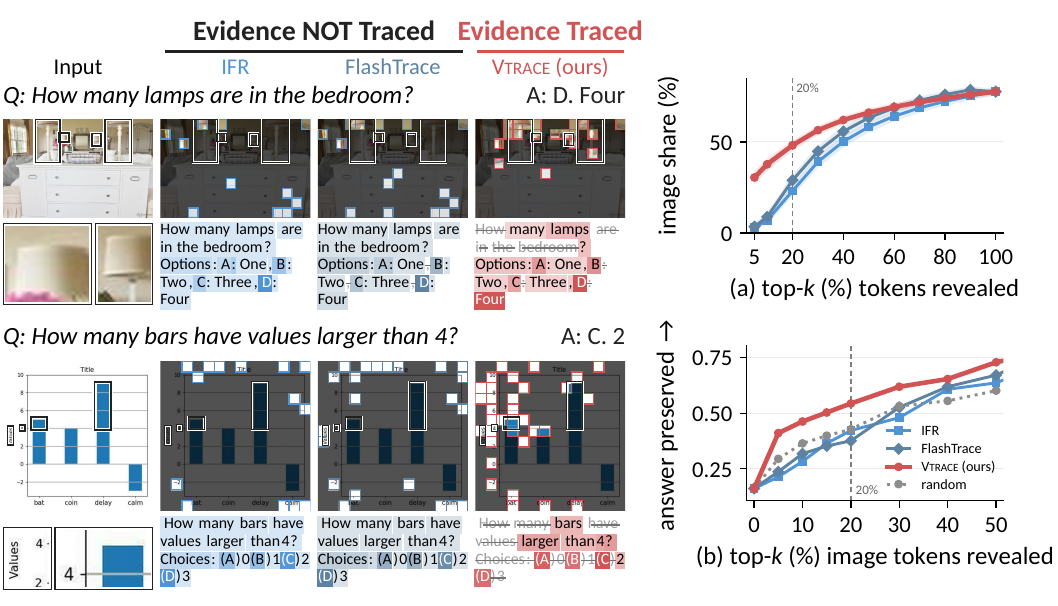}
\vspace{-1.0ex}
\captionof{figure}{
\textbf{Relevant visual evidence can be under-ranked in multimodal attribution.}
\textbf{Left:} Top-20\% attributed tokens in two examples, with color intensity indicating attribution strength. Black boxes mark the question-relevant region, enlarged below.
\textbf{Right:} (a) percentage of attributed image tokens and (b) normalized answer probability recovery across different selected token budgets.
}
\label{fig:Motivation_teaser}
\end{center}
\vspace{-1cm}

\section{Introduction} \label{sec:introduction}
\vspace{-0.2cm}
Large vision-language models (LVLMs) are increasingly capable of solving complex visual reasoning tasks by generating intermediate reasoning before arriving at a final answer~\citep{llava_cot, cot_vlm, multimodal_cot}.
However, observing the answer alone does not reveal which visual and textual evidence contributes to the answer, or how such evidence is used throughout the reasoning process~\citep{lvlm_interpret,glimpse}.
Explaining these predictions requires identifying how input evidence contributes to predictions throughout the reasoning~\citep{flashtrace, uppaal2026journey, wordsorvision}.
Token attribution provides a way to estimate these contributions: for a generated token or a target span, token attribution assigns preceding tokens an \emph{attribution score} reflecting the contribution to that output~\citep{attn_rollout, alti, attnLRP}.
A faithful attribution should accurately reflect how input evidence contributes to the selected tokens.

Existing attribution methods typically quantify token contributions using either model-internal signals or behavioral changes under intervention.
Internal-signal-based methods trace information propagation through transformer attention or token interactions to attribute predictions to preceding tokens~\citep{attn_rollout,alti,attnLRP}, whereas perturbation-based methods measure how modifying input tokens changes the model's output distribution~\citep{reagent}.
More recent methods extend attribution to the reasoning process, with FlashTrace~\citep{flashtrace} tracing influence through reasoning tokens and FlowTracer~\citep{flowtracer} modeling answer-directed information flow across the reasoning trace.

\begin{wrapfigure}{r}{0.5\textwidth}
\vspace{-0.5cm}
  \begin{center}
    \includegraphics[width=0.48\textwidth]{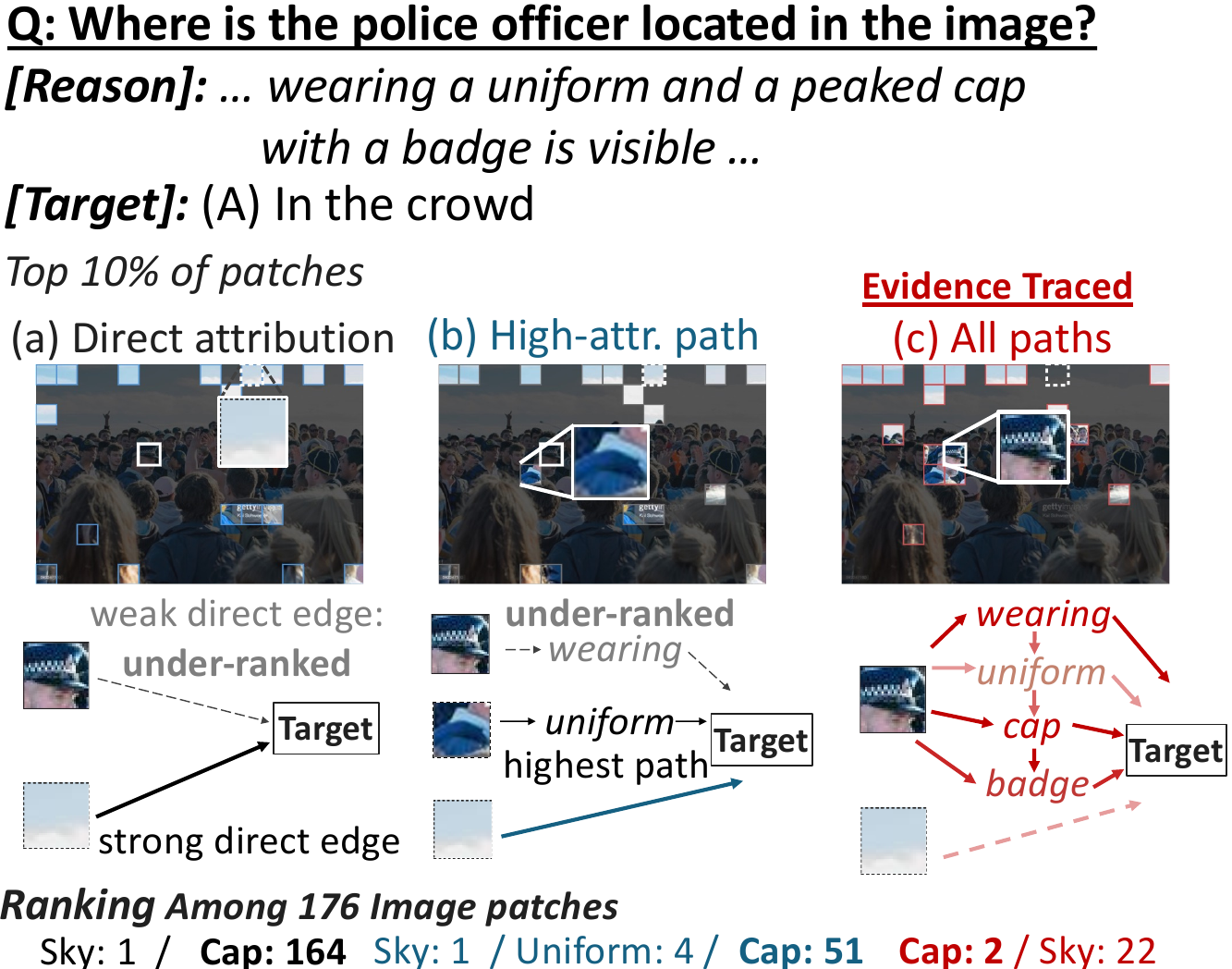}
  \end{center}
  \caption{Tracing only highly-attributed paths misses evidence that many weaker paths carry.
}
  \label{fig:challenge_1}
\vspace{-0.3cm}
\end{wrapfigure}
Despite the advances, attribution in LVLM reasoning presents two challenges.
\textbf{1) Visual evidence generally receives lower attribution scores when image and text tokens are scored together.}
Our empirical analysis reveals an imbalance in joint image-text attribution, where visual evidence are underrepresented relative to text.
Illustrated in~\Cref{fig:Motivation_teaser}, textual tokens such as answer options often receive higher attribution scores than image patches with question-relevant visual cues.
Consequently, relevant image patches can be ranked below less relevant text tokens, obscuring critical visual evidence that supports the model's reasoning.
\textbf{2) Visual evidence can be underestimated when only some of the reasoning paths to the target are traced.}
Relevant visual evidence may support the target through multiple intermediate reasoning tokens.
Shown in~\Cref{fig:challenge_1}, critical visual evidence from the officer's cap propagates through tokens such as \emph{wearing} and \emph{badge}. Direct attribution (a) ranks the cap only 164th among 176 image patches, while tracing the highest-attribution path (b) improves it to 51st. Aggregating all reasoning paths (c), however, raises it to 2nd, showing that limited path tracing substantially under-ranks evidence by capturing only part of its propagated contribution.

To address these challenges, we introduce \textsc{VTrace}, a multimodal token-attribution framework that traces visual and textual contributions through intermediate reasoning and calibrates attribution scores across modalities. To reduce the underrepresentation of visual evidence in joint image-text ranking, \textsc{VTrace} isolates token-specific contributions by removing shared components within each modality, and calibrates aggregated image and text attributions according to each modality's effect on response likelihood. To recover visual contributions distributed across intermediate reasoning, \textsc{VTrace} constructs pairwise token attributions over the full sequence and aggregates all direct and indirect reasoning paths in closed form, tracing the accumulated contributions back to the input tokens.
Together, our contributions are threefold:
\vspace{-0.2cm}
\begin{itemize}[leftmargin=*]
\item We empirically characterize two critical limitations when extending token attribution to LVLMs: incomplete recovery of visual contributions through intermediate reasoning and underrepresentation of visual evidence in joint image-text rankings.
\item We propose \textsc{VTrace}, which combines modality-aware pairwise attribution, closed-form path aggregation, and cross-modal calibration to trace source evidence through reasoning while making image and text attribution scores comparable.
\item We evaluate against seven baselines across six visual reasoning benchmarks. In joint image-text evaluation, \textsc{VTrace} improves mean RISE insertion AUC by 7.1\% and reduces deletion AUC by 18.0\%, demonstrating more faithful attribution with competitive computational efficiency.
The attribution-guided post-training experiment improves model's visual reasoning performance, suggesting the potential of \textsc{VTrace} as a learning signal for LVLM reasoning.
\end{itemize}

\vspace{-0.2cm}
\section{Related Work} \label{section: related work}
\vspace{-0.1cm}
Our work builds on three lines of research.
\textbf{1) Token attribution:} Existing methods estimate which source tokens influence a target token or output span using attention aggregation (Attention Rollout, ALTI)~\citep{attn_rollout,alti}, relevance propagation (AttnLRP)~\citep{attnLRP}, Hessian-based sensitivity (HETA)~\citep{HETA}, or perturbation-based attribution (ReAGent)~\citep{reagent}. Graph-based approaches instead trace contribution paths, where IFR builds a graph over token states and components~\citep{IFR}, FlashTrace recursively tracks attribution through generated tokens~\citep{flashtrace}, and FlowTracer models conserved flow over an attention graph~\citep{flowtracer}.
\textbf{2) Multimodal attribution:} Chefer et al.~\citep{Chefer_2021_ICCV} propagate  relevance through self- and cross-attention to attribute multimodal predictions to their inputs. LVLM-Interpret~\citep{lvlm_interpret} extends attention- and relevance-based analysis to autoregressive LVLMs, while GLIMPSE~\citep{glimpse} aggregates gradient-weighted attention across layers and generated tokens for response-level attribution. \textbf{3) Attribution through LVLM reasoning:} Multimodal-CoT and LLaVA-CoT generate intermediate reasoning from image and text before answering~\citep{multimodal_cot,llava_cot}, motivating studies of whether such reasoning faithfully reflects the evidence used by the model~\citep{cot_vlm,vlm_cot_faithfulness}. Existing attribution methods, however, only partially capture how multimodal evidence propagates through intermediate reasoning and do not address attribution-scale differences between image and text tokens. \textsc{VTrace} addresses both by aggregating direct and indirect influence across the reasoning trace and calibrating cross-modal attribution scores. Detailed review is in Appendix~\ref{appendix:extended_related_work}.

\section{Critical Challenges in Multimodal Attribution}
\label{sec:challenges}
\subsection{Preliminaries}
\textbf{Autoregressive LVLM Generation.} Given an image $I$ and a text prompt $q$, an LVLM with parameters $\theta$ generates a response $y=(y_1,\ldots,y_n)$ containing intermediate reasoning and a final answer:
\vspace{-0.1cm}
\begin{equation}
p_\theta(y\mid I,q)=\prod_{t=1}^{n}p_\theta(y_t\mid I,q,y_{<t}).
\label{eq:lvlm_generation}
\end{equation}
Under this autoregressive process, input evidence can contribute to later predictions both directly and through intermediate reasoning tokens.
Our goal is to attribute a selected output token or span, ranging from an individual answer token to the complete response, to its supporting input evidence by accounting for these direct and indirect contributions.

\begin{wrapfigure}{r}{0.50\textwidth}
    \vspace{-0.4cm}
    \centering
    \includegraphics[width=0.48\textwidth]{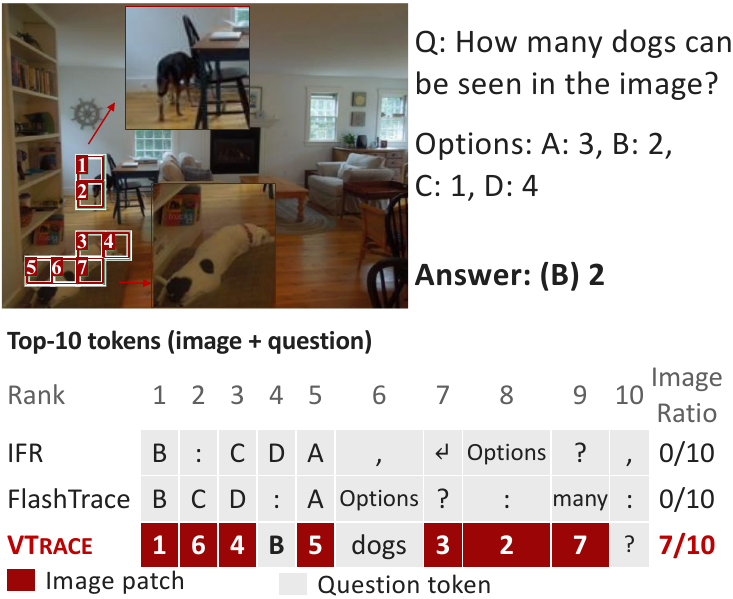}
    \caption{Existing methods overlook visual cues.}
    \label{fig:challenge_2}
    \vspace{-0.2cm}
\end{wrapfigure}
\textbf{Pairwise Token Attribution.}
Let $X=(x_1,\ldots,x_T)$ denote the complete sequence of input and generated tokens, with $\mathcal I$ and $\mathcal T$ indexing image and prompt-text tokens, respectively.
We estimate direct token-to-token contributions using a pairwise attribution matrix $W\in\mathbb{R}_{\geq 0}^{T\times T}$, where $W_{ij}$ scores the contribution from an earlier source token $x_i$ to a later receiver token $x_j$.
To trace contributions forward through the sequence, we retain only entries with $i<j$ and set $W_{ij}=0$ otherwise.
Thus, $W$ is strictly upper triangular.
A generated token can receive contributions from earlier tokens
and subsequently contribute to later tokens, allowing input evidence to propagate through intermediate reasoning.
\Cref{sec:direct_attribution} defines how the entries of $W$ are computed.

\subsection{Visual Evidence Is Obscured by Textual Attribution}
\label{sec:modality_challenge}
\vspace{-0.1cm}
In multimodal attribution, image and text tokens are jointly ranked according to their contributions, yet existing methods exhibit a strong attribution imbalanced toward text tokens. As shown in~\Cref{fig:Motivation_teaser} Right, existing methods select only a small fraction of image tokens among their highest-ranked tokens. This imbalance is also evident in~\Cref{fig:challenge_2}: when the model correctly counts two dogs, neither IFR nor FlashTrace includes an image patch in its top-10 tokens, while option letters, punctuation, and question tokens occupy much of the ranking.
These observations reveal attribution imbalance that pushes important visual evidence below textual tokens in attribution, causing the resulting attribution to underrepresent the visual information in multimodal reasoning.
In contrast, \textsc{VTrace} aligns image and text scores by their effect on response likelihood, bringing seven image patches into the top 10, producing a ranking that better reflects the visual evidence required for the answer (\Cref{sec:modality_alignment}).
\vspace{-0.2cm}
\subsection{Limited Reasoning-Path Tracing Marginalizes Visual Evidence} \label{subsec:limited_hops_challenge}
Existing attribution methods trace only direct or limited-hop contributions, making it difficult to trace distributed evidence back to its original sources.
For instance, IFR measures direct input to answer contributions~\citep{IFR}, while FlashTrace recursively propagates attribution through a limited number of reweighted reasoning steps~\citep{flashtrace}.
However, evidence from input tokens can be progressively integrated into subsequent generated tokens during reasoning, which in turn contribute to later predictions.
As a result, the final prediction receives strong attribution from intermediate reasoning tokens while assigning weak attribution to the upstream visual evidence from which they originated. This is depicted in the aforementioned~\Cref{fig:challenge_1}.
\textsc{VTrace} instead aggregates evidence over all direct and indirect paths (\Cref{sec:multihop}), allowing information propagated through the reasoning trace to be traced back to the crucial image and text tokens.

\vspace{-0.3cm}
\section{Methodology} \label{section: method}
\vspace{-0.2cm}
To address these issues, \textsc{VTrace} proceeds in three stages.
It first constructs a modality-aware pairwise attribution matrix, then aggregates direct and indirect contributions through intermediate reasoning, and finally calibrates image and text attribution scores for joint ranking.
Figure~\ref{fig:framework} presents the framework, and Algorithm is in Appendix Algorithm~\ref{alg:VTrace}.
\vspace{-0.2cm}
\subsection{Modality-Aware Pairwise Token Attribution}
\label{sec:direct_attribution}

\textbf{Tracing token-specific information.}
Consider layer $\ell$ of an LVLM with $H$ attention heads and hidden dimension $d$.
For a receiver token $x_j$, the attention block produces the update $\Delta_j^{(\ell)}$:\
\vspace{-0.1cm}
\begin{equation}
\Delta_j^{(\ell)} = \sum_{h=1}^{H}\sum_{i\leq j} \alpha_{ji}^{(\ell,h)} f_i^{(\ell,h)},
\label{eq}
\vspace{-0.1cm}
\end{equation}
where $\alpha_{ji}^{(\ell,h)}$ is the attention weight assigned by receiver token $x_j$ to source token $x_i$, and $f_i^{(\ell,h)} = v_i^{(\ell,h)}O^{(\ell,h)} \in \mathbb{R}^{d}$ is the output-projected value vector of $x_i$ at head $h$.

As shown in~\Cref{sec:modality_challenge}, directly transferring token-attribution signals across modalities leads to biased multimodal attribution.
We instead characterize a source by the information it contributes over the common component of its own modality.
For a source $x_i$ in either modality, we define the modality-aware write $\widetilde f_{i}^{(\ell,h)}$~\footnote{We use \emph{write} in the residual-stream sense: a component
reads from the residual stream and contributes an update back to it~\citep{AnthropicTransformerCircuits}.} by subtracting its modality's mean output-projected value:
\begin{equation} 
\widetilde f_{i}^{(\ell,h)} = f_i^{(\ell,h)} - \frac{1}{|\mathcal M(i)|} \sum\nolimits_{r\in\mathcal M(i)} f_r^{(\ell,h)},
\label{eq:anchor} 
\end{equation}
where $\mathcal M(i)=\mathcal I$ for $i\in\mathcal I$ and $\mathcal M(i)=\mathcal T$ for $i\in\mathcal T$, denoting the set of input tokens belonging to the same modality as token $x_i$.
This removes the modality-wise mean component from each source write, emphasizing the component that distinguishes $x_i$ from other tokens within the same modality.

To quantify the attribution from source $x_i$ to receiver $x_j$, we measure how strongly the source-specific write aligns with the update at $x_j$.
For $i<j$, the corresponding entry of $W$ is:

\begin{equation} W_{ij} = \frac{1}{L} \sum_{\ell=1}^{L} \frac{ \left[ \sum_{h=1}^{H} \alpha_{ji}^{(\ell,h)} \left\langle \widetilde f_{i}^{(\ell,h)}, \Delta_j^{(\ell)} \right\rangle \right]_{+} }{ \left\|\Delta_j^{(\ell)}\right\|_2 }, \qquad i<j, \label{eq:direct_edge} \end{equation}
and $W_{ij}=0$ otherwise. The inner product $\left\langle\cdot, \cdot\right\rangle$ measures alignment between the source vector and the receiver-token update, while $[\cdot]_+$ retains positive contributions. Thus, $W\in\mathbb{R}_{\geq0}^{T\times T}$ is a strictly upper-triangular matrix containing the direct pairwise token attribution weights. 

\begin{figure}[t!]
    \centering
    \includegraphics[width=1\linewidth]{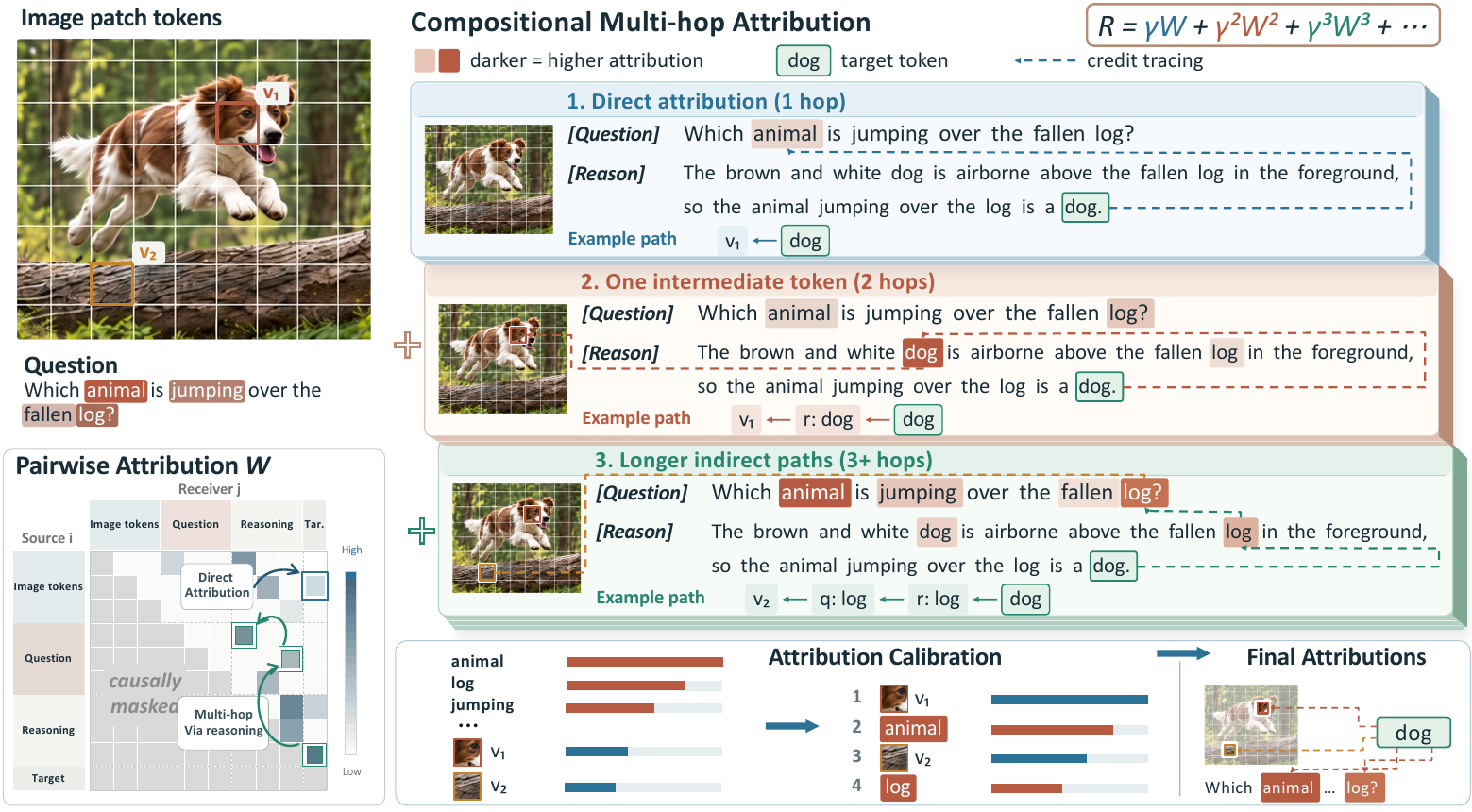}
    \vspace{-0.1cm}
    \caption{
    Overview of \textsc{VTrace}. \textsc{VTrace} (1) measures each pair of tokens against the average contribution of its own modality and (2) aggregates all direct and indirect paths in closed form, (3) enabling image patches and words to be faithfully ranked together. 
    }
    \label{fig:framework}
     \vspace{-0.7cm}
\end{figure}

\subsection{Compositional Multi-Hop Attribution: Direct and Indirect Contributions}
\label{sec:multihop}
The pairwise attribution matrix $W$ quantifies direct contributions between source and receiver tokens. A source token $x_i$, however, can contribute to a later target $x_j$ either directly or through intermediate tokens, including generated reasoning tokens. We refer to aggregating all such direct and indirect paths as \emph{compositional multi-hop} (CMH) attribution. \textsc{VTrace} computes CMH attribution over $W$ using a Katz-based formulation~\citep{katz_index}:
\begin{equation}
R_{ij} = \gamma\widehat W_{ij} + \gamma^2(\widehat W^2)_{ij} + \gamma^3(\widehat W^3)_{ij} + \cdots = \sum\nolimits_{\tau=1}^{T-1}\gamma^{\tau} (\widehat W^\tau)_{ij},
\label{eq:multihop_pair}
\end{equation}
where $\widehat W = \frac{W}{\max_{i,j}W_{ij}+\epsilon}$ is the normalized direct-attribution matrix, and $\gamma\geq0$ weights paths by length. Here, $\widehat W_{ij}$ captures direct attribution, while $(\widehat W^\tau)_{ij}$ aggregates attribution over all length-$\tau$ paths from $x_i$ to $x_j$. The finite path expansion admits the exact closed form:
\begin{equation}
R
= \sum\nolimits_{\tau=1}^{T-1}\gamma^{\tau}\widehat W^\tau
= (\mathbf I-\gamma\widehat W)^{-1}-\mathbf I
\label{eq:katz_closed_form}
\end{equation}
The proof is in Appendix~\ref{proof:W_closed_form}. Each $R_{ij}$ aggregates the direct and indirect attribution from source $x_i$ to receiver $x_j$ represented by
$\widehat W_{ij}$.

\textbf{Scoring tokens by incoming and outgoing attribution.}
The matrix $R$ assigns attribution to pairs of tokens. To obtain one score for each token, we consider every token $x_r$ that precedes the selected receiver positions $x_j \in \mathcal A$. Its incoming attribution is the total path attribution reaching $x_r$ from earlier tokens, and its outgoing attribution is the total path attribution from $x_r$ to the selected receivers:
\begin{equation}
\operatorname{In}(x_r) = \sum\nolimits_{i<r}R_{ir},
\qquad
\operatorname{Out}_{\mathcal A}(x_r) = \sum\nolimits_{j\in\mathcal A}R_{rj}.
\label{eq:incoming_outgoing_flow}
\end{equation}
We define the \textbf{uncalibrated attribution score} of $x_r$ as:
$
u_r = \bigl(1+\operatorname{In}(x_r)\bigr)\operatorname{Out}_{\mathcal A}(x_r).
\label{eq:flow_importance}
$

This score includes paths that begin at $x_r$ and paths that pass through $x_r$ before reaching a selected receiver $x_j$. The added $1$ preserves the contribution of a token with no incoming attribution.

\subsection{Aligning Image and Text Attribution Scores}
\label{sec:modality_alignment}

The multi-hop stage produces an uncalibrated attribution score $u_r$ for each upstream token. To jointly attribute image and textual tokens, we calibrate their total attribution according to each modality's contribution to the generated output.
For a modality set $\mathcal S$, we measure this contribution by the decrease in teacher-forced log-probability when its tokens are masked:
\begin{equation}
D_{\mathcal S}
= \log p_{\theta}(y\mid X)
- \log p_{\theta}\bigl(y\mid\operatorname{pad}(\mathcal S)\bigr),
\qquad
\mathcal S\in\{\mathcal I,\mathcal T,\mathcal I\cup\mathcal T\}.
\label{eq:modality_damage}
\end{equation}
Here, $\operatorname{pad}(\mathcal S)$ replaces positions in $\mathcal S$ with the model's pad-token embedding, so $D_{\mathcal S}$ measures the contribution of $\mathcal S$ to the output. To account for image-text interactions, we use exact two-player Shapley values to divide their joint contribution:
\begin{equation}
\phi_{\mathcal I} = \frac{1}{2}\left(D_{\mathcal I}+D_{\mathcal I\cup\mathcal T}-D_{\mathcal T}\right),
\qquad
\phi_{\mathcal T} = \frac{1}{2}\left(D_{\mathcal T}+D_{\mathcal I\cup\mathcal T}-D_{\mathcal I}\right).
\label{eq:modality_shapley}
\end{equation}
Let $s=\phi_{\mathcal I}/(\phi_{\mathcal I}+\phi_{\mathcal T})$ be the image share. We then rescale image-token scores by:
\begin{equation}
\lambda = \frac{s}{1-s}\frac{\sum_{j\in\mathcal T}u_j}{\sum_{i\in\mathcal I}u_i},
\label{eq:modality_rescale}
\end{equation}
and leave user-text scores unchanged. 
This matches the total image-text attribution ratio to their estimated modality contributions while preserving the ranking within each modality.

\textbf{Final \textsc{VTrace} attribution score.} For each input token $r$, the final \textsc{VTrace} attribution score is:
\begin{equation}
u_{\textsc{VTrace}}
\begin{cases}
\lambda u_r, & r\in\mathcal I,\\
u_r, & r\in\mathcal T,
\end{cases}
\qquad r\in\mathcal I\cup\mathcal T.
\label{eq:final_VTrace_score}
\end{equation}
Therefore, \textsc{VTrace} uses $R_{ij}$ to aggregate direct and indirect attribution from source $x_i$ to target $x_j$, and $u_{\textsc{VTrace}}$ as the final cross-modal attribution score to jointly rank image and prompt-text tokens.

\section{Experiments} \label{sec:experiments}

\textbf{Benchmark Datasets and Baseline Methods (Appendix~\ref{appendix:baseline_and_benchmarks}).}
We conduct experiments on 6 visual reasoning benchmarks to comphrehensively evaluate our method's capability: MMStar~\citep{mmstar}, MathVista~\citep{mathvista}, MMMU~\citep{MMMU}, MMMU-pro~\citep{mmmu-pro}, MathVerse~\citep{mathverse} and VisualPuzzles~\citep{visualPuzzles}. They cover visually dependent understanding, mathematical reasoning, and knowledge-based visual reasoning, providing diverse settings for evaluating attribution across multimodal reasoning processes.
We compare our method against existing attribution methods, including ReAGent~\citep{reagent}, HETA~\citep{HETA}, FlowTracer~\citep{flowtracer}, IFR~\citep{IFR}, Attn Rollout~\citep{attn_rollout}, AttnLRP~\citep{attnLRP}, FlashTrace~\citep{flashtrace}.

\textbf{Implementation Details.}
\label{sec:implementation_details}
Experiments are conducted on multiple scales of Qwen3-VL~\citep{qwen3vl} and InternVL3.5~\citep{internvl_3.5} (default Qwen3-VL-8B).
For each sample, we generate one response containing
intermediate reasoning and a final answer, and keep
the response fixed across all attribution methods to ensure fairness.
The main evaluation attributes the complete response to the input tokens.
Following prior work on attribution for reasoning models~\citep{flashtrace}, we evaluate attribution faithfulness using RISE~\citep{rise} and MAS~\citep{MAS}
insertion and deletion metrics, where tokens are ranked by attribution score and progressively removed or restored. 
Deletion measures how quickly the response degrades when highly attributed tokens are removed, while insertion measures how quickly it recovers when they are restored.
We report both metrics under \textbf{image-only} and \textbf{joint} settings, ranking visual tokens alone or visual and textual tokens together, respectively. For an input perturbed at step $k$, we score the model using the normalized likelihood of the original generated trace $y$:
$
f(X)_k = \exp\!\left( \frac{1}{n_{\mathrm{gen}}} \sum_{t=1}^{n_{\mathrm{gen}}} \log p_\theta \bigl(y_t \mid X^{(k)},y_{<t}\bigr) \right),
$
where $X^{(k)}$ denotes the perturbed context at step $k$.
Detailed implementation and evaluation setups are provided in Appendices~\ref{appendix:experimental_details} and~\ref{appendix:evaluaton_details}.

\begin{table}[t]
\centering
\caption{
\textbf{RISE attribution faithfulness} on Qwen3-VL-8B across six benchmarks under
\textbf{Image} and \textbf{Joint} perturbation settings.
We report RISE insertion (\textcolor{teal}{Ins.$\uparrow$}, higher is better)
and deletion (\textcolor{BurntOrange}{Del.$\downarrow$}, lower is better) AUC. The Image variant perturbs image patch tokens only, while Joint perturbs both image and text tokens.
\textcolor{Maroon}{\textbf{Best}} and \textcolor{NavyBlue}{\underline{Runner-up}} are highlighted.
}
\label{tab:rise_faithfulness_image_joint}
\resizebox{0.9\linewidth}{!}{
\renewcommand{\arraystretch}{1.08}
\begin{tabular}{lll|cccccccc}
\toprule
Dataset & Setting & \textbf{RISE}
& \textbf{ReAGent}
& \textbf{HETA}
& \textbf{FlowTracer}
& \textbf{IFR}
& \textbf{Attn Rollout}
& \textbf{AttnLRP}
& \textbf{FlashTrace}
& \cellcolor{blue!10}\textbf{\textsc{VTrace}} \\
\midrule

\multirow{4}{*}{MMStar}
& \multirow{2}{*}{Image}
& \textcolor{teal}{Ins.$\uparrow$}
& 0.505 & 0.497 & 0.532 & 0.539 & 0.539
& \underline{\textcolor{NavyBlue}{0.563}} & 0.555
& \cellcolor{blue!10}\textcolor{Maroon}{\textbf{0.600}} \\

&
& \textcolor{BurntOrange}{Del.$\downarrow$}
& 0.458 & 0.463 & 0.412 & 0.409 & 0.439
& \underline{\textcolor{NavyBlue}{0.384}} & 0.394
& \cellcolor{blue!10}\textcolor{Maroon}{\textbf{0.352}} \\

\cmidrule(lr){2-11}

& \multirow{2}{*}{Joint}
& \textcolor{teal}{Ins.$\uparrow$}
& 0.371 & 0.489 & 0.506 & 0.507 & 0.501
& 0.529 & \underline{\textcolor{NavyBlue}{0.554}}
& \cellcolor{blue!10}\textcolor{Maroon}{\textbf{0.581}} \\

&
& \textcolor{BurntOrange}{Del.$\downarrow$}
& 0.325 & 0.245 & 0.226 & 0.222 & 0.273
& 0.219 & \underline{\textcolor{NavyBlue}{0.204}}
& \cellcolor{blue!10}\textcolor{Maroon}{\textbf{0.180}} \\
\cmidrule(lr){1-11}

\multirow{4}{*}{MathVista}
& \multirow{2}{*}{Image}
& \textcolor{teal}{Ins.$\uparrow$}
& 0.500 & 0.514 & 0.579 & 0.591 & 0.584
& 0.599 & \underline{\textcolor{NavyBlue}{0.600}}
& \cellcolor{blue!10}\textcolor{Maroon}{\textbf{0.662}} \\

&
& \textcolor{BurntOrange}{Del.$\downarrow$}
& 0.447 & 0.444 & 0.368 & 0.363 & 0.394
& 0.358 & \underline{\textcolor{NavyBlue}{0.354}}
& \cellcolor{blue!10}\textcolor{Maroon}{\textbf{0.311}} \\

\cmidrule(lr){2-11}

& \multirow{2}{*}{Joint}
& \textcolor{teal}{Ins.$\uparrow$}
& 0.382 & 0.467 & 0.517 & 0.516 & 0.501
& 0.512 & \underline{\textcolor{NavyBlue}{0.574}}
& \cellcolor{blue!10}\textcolor{Maroon}{\textbf{0.647}} \\

&
& \textcolor{BurntOrange}{Del.$\downarrow$}
& 0.345 & 0.307 & 0.274 & 0.269 & 0.331
& 0.274 & \underline{\textcolor{NavyBlue}{0.241}}
& \cellcolor{blue!10}\textcolor{Maroon}{\textbf{0.178}} \\
\cmidrule(lr){1-11}

\multirow{4}{*}{MMMU}
& \multirow{2}{*}{Image}
& \textcolor{teal}{Ins.$\uparrow$}
& 0.543 & 0.564 & 0.595 & 0.610
& \underline{\textcolor{NavyBlue}{0.627}}
& 0.612 & 0.618
& \cellcolor{blue!10}\textcolor{Maroon}{\textbf{0.665}} \\

&
& \textcolor{BurntOrange}{Del.$\downarrow$}
& 0.482 & 0.480 & 0.438 & 0.425 & 0.444
& 0.419 & \underline{\textcolor{NavyBlue}{0.417}}
& \cellcolor{blue!10}\textcolor{Maroon}{\textbf{0.379}} \\

\cmidrule(lr){2-11}

& \multirow{2}{*}{Joint}
& \textcolor{teal}{Ins.$\uparrow$}
& 0.368 & 0.589 & 0.583 & 0.608 & 0.604
& 0.615 & \underline{\textcolor{NavyBlue}{0.629}}
& \cellcolor{blue!10}\textcolor{Maroon}{\textbf{0.660}} \\

&
& \textcolor{BurntOrange}{Del.$\downarrow$}
& 0.315 & 0.216 & 0.204 & 0.195 & 0.241
& 0.191 & \underline{\textcolor{NavyBlue}{0.178}}
& \cellcolor{blue!10}\textcolor{Maroon}{\textbf{0.159}} \\
\cmidrule(lr){1-11}

\multirow{4}{*}{MMMU-Pro}
& \multirow{2}{*}{Image}
& \textcolor{teal}{Ins.$\uparrow$}
& 0.527 & 0.542 & 0.570 & 0.572
& \underline{\textcolor{NavyBlue}{0.604}}
& 0.593 & 0.583
& \cellcolor{blue!10}\textcolor{Maroon}{\textbf{0.645}} \\

&
& \textcolor{BurntOrange}{Del.$\downarrow$}
& 0.472 & 0.472 & 0.442 & 0.435 & 0.444
& \underline{\textcolor{NavyBlue}{0.413}} & 0.429
& \cellcolor{blue!10}\textcolor{Maroon}{\textbf{0.378}} \\

\cmidrule(lr){2-11}

& \multirow{2}{*}{Joint}
& \textcolor{teal}{Ins.$\uparrow$}
& 0.371 & 0.558 & 0.532 & 0.539 & 0.554
& 0.559 & \underline{\textcolor{NavyBlue}{0.584}}
& \cellcolor{blue!10}\textcolor{Maroon}{\textbf{0.619}} \\

&
& \textcolor{BurntOrange}{Del.$\downarrow$}
& 0.322 & 0.241 & 0.242 & 0.242 & 0.269
& 0.229 & \underline{\textcolor{NavyBlue}{0.212}}
& \cellcolor{blue!10}\textcolor{Maroon}{\textbf{0.181}} \\
\cmidrule(lr){1-11}

\multirow{4}{*}{MathVerse}
& \multirow{2}{*}{Image}
& \textcolor{teal}{Ins.$\uparrow$}
& 0.523 & 0.565 & 0.636 & 0.640 & 0.624
& 0.610 & \underline{\textcolor{NavyBlue}{0.644}}
& \cellcolor{blue!10}\textcolor{Maroon}{\textbf{0.684}} \\

&
& \textcolor{BurntOrange}{Del.$\downarrow$}
& 0.454 & 0.430
& \underline{\textcolor{NavyBlue}{0.341}}
& 0.345 & 0.384 & 0.360
& \underline{\textcolor{NavyBlue}{0.341}}
& \cellcolor{blue!10}\textcolor{Maroon}{\textbf{0.296}} \\

\cmidrule(lr){2-11}

& \multirow{2}{*}{Joint}
& \textcolor{teal}{Ins.$\uparrow$}
& 0.383 & 0.495 & 0.539 & 0.540 & 0.513
& 0.547 & \underline{\textcolor{NavyBlue}{0.568}}
& \cellcolor{blue!10}\textcolor{Maroon}{\textbf{0.606}} \\

&
& \textcolor{BurntOrange}{Del.$\downarrow$}
& 0.330 & 0.314 & 0.279 & 0.283 & 0.342
& 0.274 & \underline{\textcolor{NavyBlue}{0.241}}
& \cellcolor{blue!10}\textcolor{Maroon}{\textbf{0.187}} \\
\cmidrule(lr){1-11}

\multirow{4}{*}{VisualPuzzles}
& \multirow{2}{*}{Image}
& \textcolor{teal}{Ins.$\uparrow$}
& 0.456 & 0.461 & 0.468 & 0.511 & 0.525
& \underline{\textcolor{NavyBlue}{0.531}} & 0.510
& \cellcolor{blue!10}\textcolor{Maroon}{\textbf{0.557}} \\

&
& \textcolor{BurntOrange}{Del.$\downarrow$}
& 0.409 & 0.405 & 0.379 & 0.354 & 0.380
& \underline{\textcolor{NavyBlue}{0.324}} & 0.358
& \cellcolor{blue!10}\textcolor{Maroon}{\textbf{0.309}} \\

\cmidrule(lr){2-11}

& \multirow{2}{*}{Joint}
& \textcolor{teal}{Ins.$\uparrow$}
& 0.376 & 0.488 & 0.477 & 0.504 & 0.523
& 0.529 & \underline{\textcolor{NavyBlue}{0.530}}
& \cellcolor{blue!10}\textcolor{Maroon}{\textbf{0.571}} \\

&
& \textcolor{BurntOrange}{Del.$\downarrow$}
& 0.348 & 0.288 & 0.291 & 0.283 & 0.283
& \underline{\textcolor{NavyBlue}{0.255}} & 0.268
& \cellcolor{blue!10}\textcolor{Maroon}{\textbf{0.207}} \\

\bottomrule
\end{tabular}
}
\vspace{-0.5cm}
\end{table}
\subsection{Quantitative Analysis}

\begin{wraptable}{r}{0.5\textwidth}
  \centering
  \vspace{-0.35cm}
  \caption{Ablation study on different components}
  \label{tab:component_ablation}
  \small
  \setlength{\tabcolsep}{4pt}
  \renewcommand{\arraystretch}{1.1}
  \resizebox{\linewidth}{!}{%
  \begin{tabular}{lcc|cc}
  \toprule
  \multirow{2}{*}{Method}
  & \multicolumn{2}{c|}{Image}
  & \multicolumn{2}{c}{Joint} \\
  \cmidrule(lr){2-3}\cmidrule(lr){4-5}
  & Ins.\,$\uparrow$ & Del.\,$\downarrow$ & Ins.\,$\uparrow$ & Del.\,$\downarrow$ \\
  \midrule
  \textbf{\textsc{VTrace}}                       & \textbf{0.631} & \textbf{0.327} & \textbf{0.610} & \textbf{0.187} \\
  \textbf{w/o centering}          & 0.625 & 0.340 & 0.599 & 0.200 \\
  \textbf{w/o multi-hop}       & 0.578 & 0.362 & 0.470 & 0.273 \\
  \textbf{w/o calibration}     & 0.631 & 0.327 & 0.602 & 0.200 \\
  \bottomrule
  \end{tabular}
  }
  \vspace{-0.3cm}
\end{wraptable}

\textbf{\textsc{VTrace} consistently outperforms existing attribution methods under both Image and Joint evaluation.}
As shown in~\Cref{tab:rise_faithfulness_image_joint}, \textsc{VTrace} outperforms all baselines across six benchmarks. Under the Image setting, \textsc{VTrace} improves average RISE insertion by 6.8\% and reduces deletion by 9.3\%. Under the Joint setting, it improves insertion by 7.1\% and reduces deletion by 18.0\%.
These consistent gains indicate that \textsc{VTrace} more faithfully identifies and ranks the visual and textual tokens contributing to the model response. Moreover, improvements in both insertion and deletion further show that highly ranked tokens are more effective at recovering the response when restored and disrupting it when removed.
For clarity, the main table reports RISE only, with complete RISE and MAS results provided in Appendix~\ref{appendix:main_results}.

\textbf{Ablation study.} We conduct ablation study on each component of \textsc{VTRACE}, including modality centering, compositional multi-hop aggregation and cross-modal calibration.
As reported in~\Cref{tab:component_ablation}, removing each component lowers attribution faithfulness.
Specifically, modality centering and multi-hop aggregation improve faithfulness in both variants. Calibration further improves Joint attribution while preserving the relative ranking of image tokens, complementing the token level attribution with cross-modal calibration. 
The sensitivity study and  is in Appendix~\ref{appendix:hyperparameter_ifr}.

\textbf{Efficiency Analysis.}
\textsc{VTrace} efficiently scales to full-trace attribution because it constructs the attribution matrix once and then reuses it to attribute every generated token.
For all methods, we measure the full attribution runtime, from processing the fixed response to producing the final attribution scores.
As a result, moving from one target span to all generated tokens adds only minimal overhead, increasing medium runtime from 0.24 s to 0.26 s (\Cref{fig:efficiency}(a)). Notably, this efficiency is achieved without sacrificing faithfulness, where~\Cref{fig:efficiency}(b) shows that \textsc{VTrace} attains the best faithfulness at substantially lower attribution time than baselines. Moreover, \textbf{\textsc{VTrace} scales favorably in memory and runtime}, using 31 GB at around 3,000 tokens versus 87 GB for AttnLRP (\Cref{fig:efficiency}(c)) and remaining faster than baselines as sequence length grows (\Cref{fig:efficiency}(d)).

\begin{figure*}[t]
    \centering
    \includegraphics[width=0.95\linewidth]{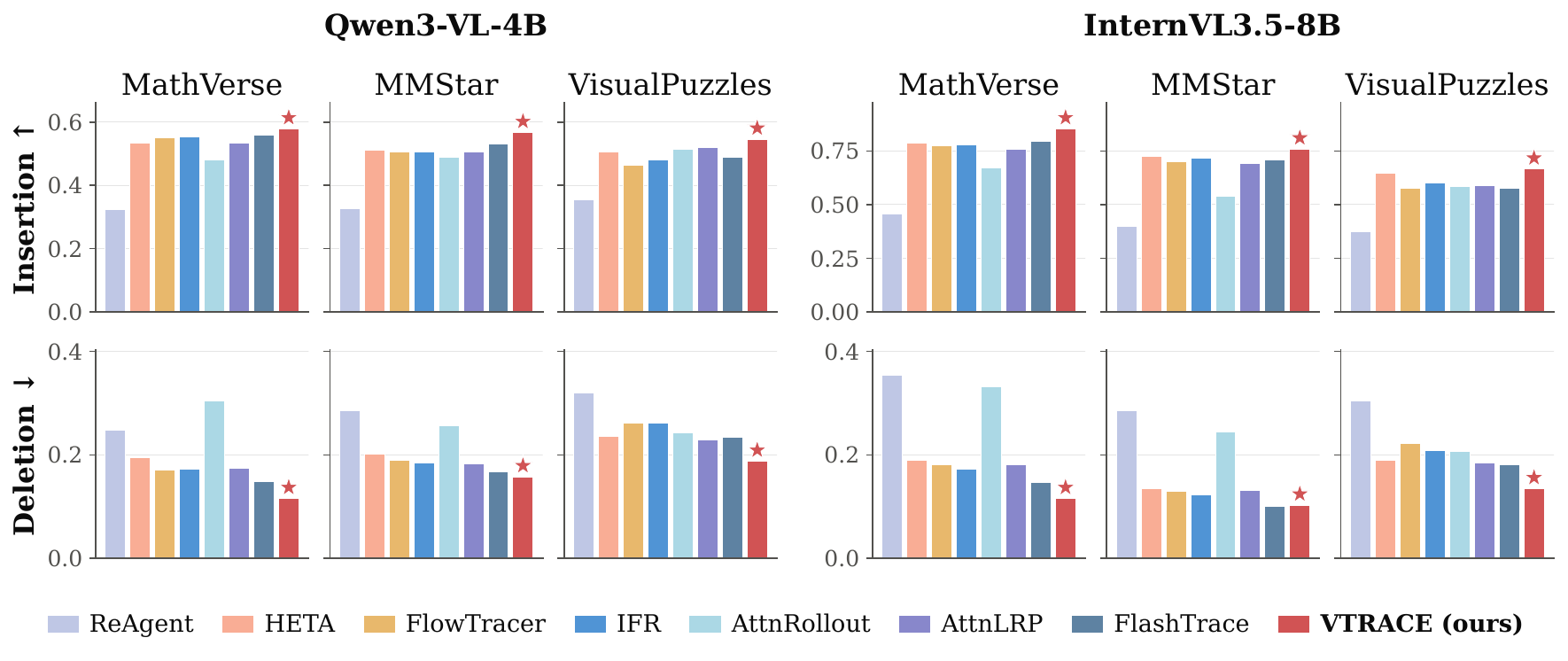}
    \caption{Attribution faithfulness across different LVLM sizes and families. 
    }
    \label{fig:ablation_model_family_size}
    \vspace{-0.3cm}
\end{figure*}
\textbf{\textsc{VTrace} consistently improves attribution faithfulness across different LVLM architectures and sizes}. Shown in~\Cref{fig:ablation_model_family_size}, \textsc{VTrace} achieves the highest RISE insertion and lowest deletion scores on both Qwen3-VL-4B and InternVL3.5-8B across diverse benchmarks, indicating strong generalization across model families and sizes.
Full results are in Appendix~\ref{appendix:generalizability_experiments}.

\begin{table*}[t!] \centering \caption{ Attribution faithfulness on correct and incorrect model predictions over benchmarks. We compare against the strongest baseline for each metric independently.} \label{tab:correctness_analysis}
\scriptsize \setlength{\tabcolsep}{3.5pt}
\resizebox{\textwidth}{!}{
\begin{tabular}{llcccccccc}
\toprule & & \multicolumn{4}{c}{Correct Predictions} & \multicolumn{4}{c}{Incorrect Predictions} \\ \cmidrule(lr){3-6} \cmidrule(lr){7-10} Variant & Method & Del. RISE $\downarrow$ & Del. MAS $\downarrow$ & Ins. RISE $\uparrow$ & Ins. MAS $\uparrow$ & Del. RISE $\downarrow$ & Del. MAS $\downarrow$ & Ins. RISE $\uparrow$ & Ins. MAS $\uparrow$ \\ 
\midrule 
\multirow{2}{*}{Image} & \textbf{Best Baseline} & 0.377 & 0.508 & 0.590 & 0.451 & 0.424 & 0.579 & 0.603 & 0.460 \\ 
& \textbf{\textsc{VTrace}} & \textbf{0.333} & \textbf{0.468} & \textbf{0.644} & \textbf{0.514} & \textbf{0.398} & \textbf{0.550} & \textbf{0.636} & \textbf{0.496} \\ 
\midrule \multirow{2}{*}{Joint} & \textbf{Best Baseline} & 0.209 & 0.338 & 0.581 & 0.381 & 0.200 & 0.323 & 0.605 & 0.410 \\ 
& \textbf{\textsc{VTrace}} & \textbf{0.171} & \textbf{0.235} & \textbf{0.626} & \textbf{0.463} & \textbf{0.176} & \textbf{0.241} & \textbf{0.640} & \textbf{0.474} \\ 
\bottomrule \end{tabular} 
}
\vspace{-0.2cm}
\end{table*}

\begin{table*}[t!]
\centering
\caption{
Attribution-guided learning with \textsc{VTRACE} on Qwen3-VL-4B. Incorporating \textsc{VTrace} attribution into GRPO improves the average performance across benchmarks.
}
\label{tab:training_analysis}
\small
\resizebox{\textwidth}{!}{
\begin{tabular}{lccccccccc}
\toprule
Method
& MMStar
& MathVista
& MMMU
& MathVerse
& MMMU-Pro
& HallusionBench
& RealWorldQA
& MathVision
& Avg. \\
\midrule

\textbf{Base}
& 63.53
& 73.30
& 55.33
& 56.55
& 47.57
& 70.87
& 72.55
& 41.78
& 60.19 \\

\textbf{GRPO}
& 69.53
& \textbf{77.40}
& 61.89
& 62.64
& \textbf{51.68}
& 72.77
& 72.29
& 43.75
& 63.99 \\

+ \textbf{\textsc{VTrace}}
& \textbf{70.20}
& 77.20
& \textbf{65.00}
& \textbf{65.13}
& 51.45
& \textbf{73.29}
& \textbf{73.20}
& \textbf{46.71}
& \textbf{65.27} \\

\bottomrule
\end{tabular}
}
\vspace{-0.5cm}
\end{table*}
\textbf{\textsc{VTrace} improves attribution faithfulness regardless of answer correctness.} Shown in~\Cref{tab:correctness_analysis}, \textsc{VTrace} consistently outperforms the strongest baselines across different evaluation settings. This indicates that \textsc{VTrace} can faithfully identify influential token relationships regardless of the model's prediction correctness. Full results are in Appendix~\ref{appendix:correctness_full_results}.

\begin{wrapfigure}{r}{0.47\textwidth}
  \vspace{-0.6cm}
  \centering
  \includegraphics[width=\linewidth]{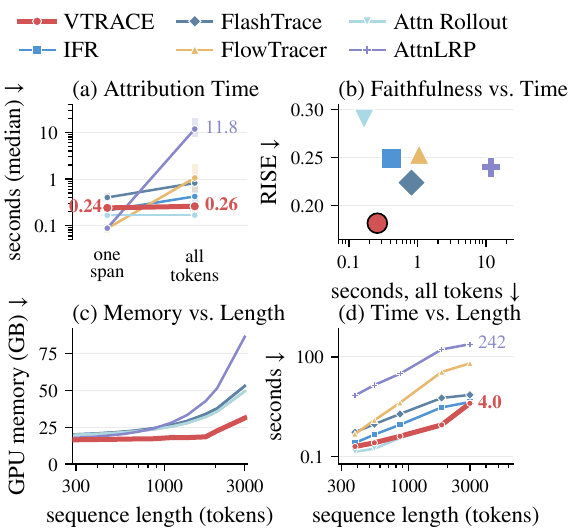}
  \vspace{-0.3cm}
  \caption{Efficiency analysis.}
  \label{fig:efficiency}
  \vspace{-1.0cm}
\end{wrapfigure}
\textbf{\textsc{VTrace} can also provide a meaningful learning signal for improving LVLM reasoning}.
Following~\citep{flowtracer}, we use its token-level attribution for credit assignment in Group Relative Policy Optimization (GRPO), directly integrating traced reasoning flow into post-training. Shown in~\Cref{tab:training_analysis}, attribution-guided GRPO improves the average performance of Qwen3-VL-4B from $63.99$ to $65.27$. This demonstrates that \textsc{VTrace} captures token relationships that are useful for optimization, extending its value beyond interpretability to downstream reinforcement learning and post-training.

\begin{figure*}[t]
    \centering
    \includegraphics[width=0.97\linewidth]{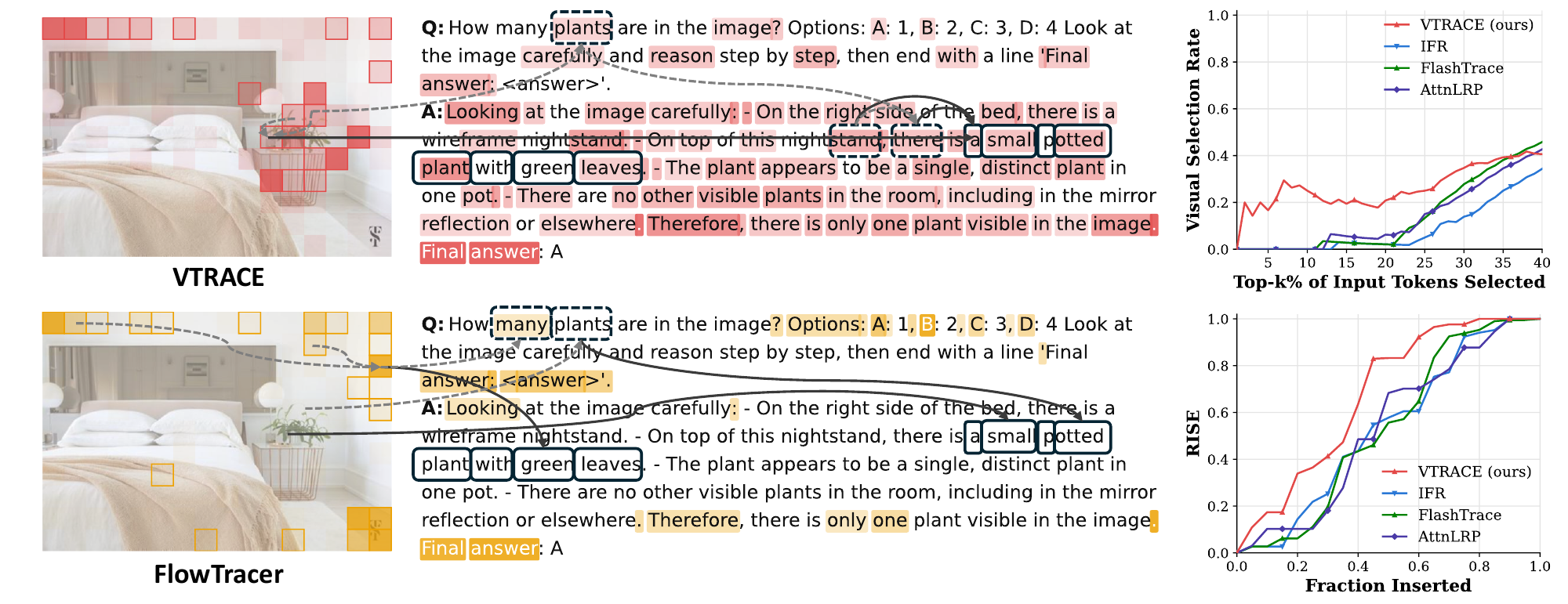}

    \caption{Qualitative study: \textsc{VTrace} better traces reasoning tokens to key visual evidence.}
    \label{fig:case_study} 
\end{figure*}

\begin{figure*}[t]
    \centering
    \includegraphics[width=0.95\linewidth]{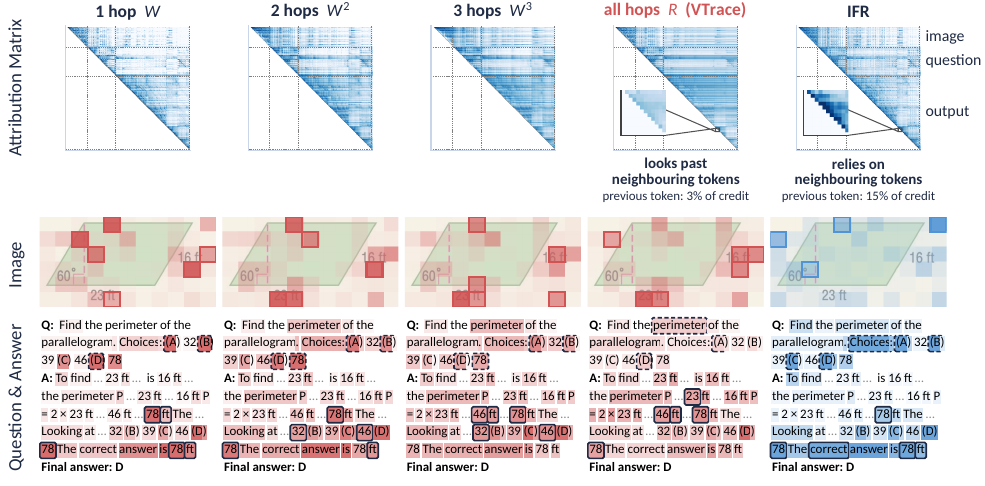}
    \caption{Qualitative study: Attribution comparison between diverse hops and IFR.
}
    \label{fig:hops_casestudy} 
    \vspace{-0.3cm}
\end{figure*}

\vspace{-0.2cm}
\subsection{Qualitative Analysis} \label{sec:qualitative_analysis}
\vspace{-0.1cm}
\textbf{\textsc{VTrace} achieves faster recovery under insertion and sharper degradation under deletion.} 
Elaborated in Appendix~\ref{appendix:fine_grained_plot}~\Cref{fig:fine_grained_curves}, \textsc{VTrace} achieves faster recovery under insertion and sharper degradation under deletion. 

\textbf{Case study.}
\textbf{\textsc{VTrace} more clearly traces visual evidence through intermediate reasoning to the final response.} Evident in~\Cref{fig:case_study}, \textsc{VTrace} assigns stronger attribution to plant-related image patches, while the baseline focuses more on background regions. \textsc{VTrace}'s token-level paths further connect plant-related reasoning tokens back to the corresponding visual patches, which the baseline largely misses.
Notably, when top 10\% tokens are recovered, image tokens account for around a quarter of \textsc{VTrace}'s selected tokens, while baselines select almost none. The RISE insertion curve (bottom-right) further shows that cross-modal calibration helps \textsc{VTrace} recover response likelihood with fewer inserted tokens.
Moreover, \Cref{fig:hops_casestudy} shows that limited-hop attribution concentrates credit on nearby reasoning tokens and misses key visual evidence (e.g., IFR), reiterating the challenge we identified in Sec~\ref{subsec:limited_hops_challenge}. By aggregating all reasoning paths, \textsc{VTrace} traces this credit back to the image patches that support the answer. An extended analysis of the diverse hops and \textsc{VTrace} is in Appendix~\ref{appendix:hops_ablation_performance}. Additional case studies are in Appendix~\ref{appendix:case_studies}.

\vspace{-0.2cm}
\section{Conclusion}
\vspace{-0.2cm}
In this work, we investigate faithful multimodal token attribution for LVLM reasoning, with the aim of tracing generated predictions back to the faithful visual and textual evidence that supports the generation.
Through empirical analysis, we find that existing methods struggle with contributions propagated through intermediate reasoning and underrepresent relevant visual evidence.
To address these challenges, we propose \textsc{VTrace}, which represents token attributions in a pairwise attribution matrix and considers both direct and indirect paths. \textsc{VTrace} further calibrates image and text attribution to provide an unbiased cross-modal ranking.
Experiments across six visual reasoning benchmarks consistently outperforms baseline methods, demonstrating that \textsc{VTrace} provides more faithful multimodal attribution while maintaining efficiency.

\bibliography{iclr2027_conference}
\bibliographystyle{iclr2027_conference}

\newpage
\appendix
\section*{Appendix}

\noindent We summarize the key structure of the Appendix as follow:

\begin{itemize}
    \item \textbf{\Cref{appendix:limitations}:} Limitations
    \item \textbf{\Cref{appendix:extended_related_work}:} Extended Related Work
    \item \textbf{\Cref{proof:W_closed_form}:} Proof of \Cref{eq:katz_closed_form}
    \item \textbf{\Cref{appendix:algo}:} \textsc{VTrace} Algorithm
    \item \textbf{\Cref{appendix:baseline_and_benchmarks}:} Baseline \& Benchmark Descriptions
    \item \textbf{\Cref{appendix:experimental_details}:} Experimental Details
    \item \textbf{\Cref{appendix:evaluaton_details}:} Evaluation Details
    \item \textbf{\Cref{appendix:extended_experiments}:} Extended Experiments
    \item \textbf{\Cref{appendix:case_studies}:} Case Studies
\end{itemize}

\section{Limitations} \label{appendix:limitations}
While \textsc{VTrace} demonstrates strong effectiveness, several limitations remain.
Our current evaluation focuses primarily on image-based visual reasoning, while modern LVLMs also operate in broader settings, including multi-image reasoning, long-context multimodal understanding, and longer-horizon agentic interaction. Extending \textsc{VTRACE} to these settings would further establish the generality of the proposed attribution framework.
Another direction is to integrate attribution into model training to improve model visual reasoning behavior. Our preliminary attribution-guided learning experiments suggest that this direction is promising.

\section{Extended Related Work}
\label{appendix:extended_related_work}
\subsection{Token Attribution and Information Flow}

\textbf{Token attribution} traces how source tokens contribute to the generation of a target token or output span. Existing methods differ mainly in the model signals used to define this contribution. Attention Rollout composes attention matrices across layers to capture how information is mixed between token positions throughout the network~\citep{attn_rollout}. ALTI similarly aggregates token interactions across layers, but derives them from the attention block while accounting for residual connections and layer normalization~\citep{alti}. Chefer et al.~\citep{Chefer_2021_ICCV} propose to propagate relevance through attention and residual operations, while AttnLRP extends the layer-wise relevance propagation to Transformer attention and supports attribution to both input tokens and intermediate representations~\citep{attnLRP}. HETA incorporates attention and value information together with Hessian-based sensitivity and KL divergence under token masking~\citep{HETA}.

\textbf{For generative models}, attribution must additionally account for the role of previously generated tokens. A source token may affect the final prediction directly or indirectly through intermediate tokens in the generated sequence. ALTI+ study this setting in machine translation by tracing contributions from both the source sentence and the generated prefix~\citep{ALTI+}. IFR represents each prediction as a computation graph whose nodes correspond to token states and model components, allowing attribution to follow routes through internal representations~\citep{IFR}.

Several recent approaches \textbf{explicitly model indirect contribution} through generated tokens. FlashTrace aggregates attribution over output spans and recursively follows attribution absorbed by intermediate generated tokens through limited steps~\citep{flashtrace}. This captures cases in which an early source influences the selected output through later reasoning tokens rather than through a single direct interaction. FlowTracer constructs an attention-based flow graph, assigns capacities to edges, reweights them according to their ability to reach the target region, and imposes local flow conservation~\citep{flowtracer}. The resulting token-level flow scores are further used as feedback for reinforcement learning. Both methods therefore move beyond purely local attribution by considering routes through intermediate tokens. \textsc{VTrace} instead aggregates multiple forward paths between source and receiver positions, allowing direct and multi-step effects to be represented within a single computation. The resulting incoming and outgoing contributions are then used to score earlier tokens with respect to selected receiver tokens.

A \textbf{complementary family} of methods \textbf{estimates importance through perturbation or intervention} rather than tracing internal connections. ReAGent repeatedly replaces subsets of input tokens and updates their importance according to changes in the model's next-token prediction~\citep{reagent}. This requires only forward evaluations and does not depend on gradients or explicit access to internal attribution signals. \textsc{VTrace} separates this role from its token-level attribution computation: the pairwise attribution matrix is obtained once from cached model states, while additional forward evaluations with replaced image or prompt embeddings are used only to calibrate the relative contribution of the two input modalities. In our experiments, we further evaluate attribution faithfulness by perturbing the input tokens ranked by each method and measuring the resulting change in model prediction.

\subsection{Attribution Across Image and Text}

Attribution in large vision-language models introduces an additional challenge because evidence is distributed across heterogeneous visual and textual representations. A meaningful attribution must therefore identify important evidence within each modality fairly. Chefer et al.~\citep{Chefer_2021_ICCV} extend Transformer explanation methods to multimodal architectures containing self-attention, co-attention, and encoder-decoder attention, propagating relevance through the attention interactions that connect visual and textual representations. LVLM-Interpret provides attention maps, relevance maps, and causal visualizations for inspecting how large vision-language models generate responses~\citep{lvlm_interpret}. GLIMPSE combines gradient-weighted attention with propagation across layers and aggregation over generated tokens to identify visual and textual evidence supporting a generated response~\citep{glimpse}.

\subsection{Attribution for Reasoning LVLM}

Recent LVLMs increasingly generate explicit intermediate reasoning before producing a final answer. Multimodal-CoT generates a textual rationale conditioned on both image and language inputs and then uses this rationale to predict the response~\citep{multimodal_cot}. LLaVA-CoT structures multimodal reasoning into stages including summarization, visual interpretation, logical reasoning, and conclusion, and performs search over intermediate stages during inference~\citep{llava_cot}. These approaches make intermediate generated tokens an explicit part of the inference process rather than treating the model as producing only a final answer.

Prior work has also examined whether multimodal reasoning faithfully reflects the evidence used by the model. Chen et al.~\citep{cot_vlm} introduce a benchmark and metrics for evaluating reasoning consistency in vision-language models. Balasubramanian et al.~\citep{vlm_cot_faithfulness} study whether generated reasoning reflects visual and textual biases introduced into the input. Their results show that models are less likely to acknowledge subtle visual cues than explicit textual cues, highlighting a potential mismatch between generated explanations and the evidence that actually affects the prediction. Such findings motivate attribution methods that inspect model-internal contributions rather than relying only on the content of the generated rationale.

\textsc{VTrace} targets this setting by estimating how information propagates through multimodal generations. Given selected receiver tokens, which may correspond to reasoning steps, answer tokens, or an output span, it traces contributions from earlier image, prompt, and generated tokens through both direct and indirect paths. This provides a way to inspect which input tokens supports an intermediate reasoning step, how earlier reasoning tokens influence later ones, and which sources ultimately contribute to the final answer.

\section{Proof of Eq.~\ref{eq:katz_closed_form}} \label{proof:W_closed_form}
\begin{theorem}[Closed-form multi-path aggregation]
Let $\widehat W \in \mathbb{R}^{T \times T}$ be the strictly upper-triangular matrix. Define:
\begin{equation}
R = \sum_{\tau=1}^{T-1} \gamma^\tau \widehat W^\tau.
\end{equation}
Then:
\begin{equation}
R = (\mathbf I - \gamma \widehat W)^{-1} - \mathbf I.
\end{equation}
\end{theorem}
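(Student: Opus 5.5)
The plan is to use nilpotency of strictly upper-triangular matrices and then a telescoping product. First I show that $\widehat W^T = 0$, and that in fact $(\widehat W^\tau)_{ij}=0$ unless $j-i\ge\tau$. I prove this by induction on $\tau$. The entry $(\widehat W^{\tau})_{ij}$ is a sum over chains $i=k_0,k_1,\dots,k_\tau=j$ of products $\prod_s \widehat W_{k_{s-1}k_s}$. A product can be nonzero only if $k_0<k_1<\dots<k_\tau$, which forces $j-i\ge\tau$. Since indices range over $\{1,\dots,T\}$, we have $j-i\le T-1$, so $\widehat W^\tau=0$ for all $\tau\ge T$. Because $\gamma\widehat W$ is also strictly upper triangular, the same holds for $(\gamma\widehat W)^\tau$.

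Next I set $N=\gamma\widehat W$ and $S=\sum_{\tau=0}^{T-1}N^\tau=\mathbf I+R$. I then compute the product $(\mathbf I-N)S$ by telescoping:
\begin{equation*}
(\mathbf I-N)S=\sum_{\tau=0}^{T-1}N^\tau-\sum_{\tau=1}^{T}N^\tau=\mathbf I-N^T=\mathbf I .
\end{equation*}
The same computation gives $S(\mathbf I-N)=\mathbf I$, because $N$ commutes with its own powers.

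This shows that $\mathbf I-N$ is invertible with inverse $S$. Independently, invertibility also follows because $\mathbf I-\gamma\widehat W$ is unit upper triangular, so its determinant is $1$. Hence $(\mathbf I-\gamma\widehat W)^{-1}=\mathbf I+R$, and rearranging gives the claim.

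The only step needing care is the nilpotency bound $N^T=0$, which I handle through the index-chain argument above. The invertibility, and the well-definedness of $(\mathbf I-\gamma\widehat W)^{-1}$ for every $\gamma\ge0$, come for free. No spectral-radius or convergence condition is needed, unlike the general Katz series. I will also note that this shows the infinite series in \Cref{eq:multihop_pair} truncates exactly at $\tau=T-1$.
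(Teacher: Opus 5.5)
Your proof is correct and follows the same route as the paper: nilpotency $\widehat W^{T}=\mathbf 0$, then the telescoping identity $(\mathbf I-\gamma\widehat W)\sum_{\tau=0}^{T-1}(\gamma\widehat W)^{\tau}=\mathbf I-\gamma^{T}\widehat W^{T}=\mathbf I$. The only difference is that you justify the nilpotency through increasing index chains and check that the inverse is two-sided, whereas the paper simply asserts $\widehat W^{T}=\mathbf 0$ and uses the one-sided product, which already suffices for square matrices.
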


\begin{proof}

Because $\widehat W$ is strictly upper triangular,
$\widehat W^T=\mathbf 0$.
Now consider the finite matrix geometric series and its expansion:
\begin{align}
S=\sum_{\tau=0}^{T-1}(\gamma\widehat W)^\tau
&=\mathbf I
+\gamma\widehat W
+\gamma^2\widehat W^2
+\cdots
+\gamma^{T-1}\widehat W^{T-1}\\
&=\mathbf I 
+ \sum_{\tau=1}^{T-1} \gamma^\tau\widehat W^\tau. 
\end{align}

Multiplying $S$ by $\mathbf I-\gamma\widehat W$ gives:
\begin{equation}
(\mathbf I-\gamma\widehat W)S = S-\gamma\widehat W S.
\end{equation}

Substituting the expression for $S$, we have:
\begin{align}
(\mathbf I-\gamma\widehat W)S &=
\left( \mathbf I
+\gamma\widehat W
+\gamma^2\widehat W^2
+\cdots
+\gamma^{T-1}\widehat W^{T-1}
\right)
\nonumber\\
&\quad -
\left(
\gamma\widehat W
+\gamma^2\widehat W^2
+\cdots
+\gamma^T\widehat W^T \right).
\end{align}

Note that all the matching intermediate terms cancel, which leaves us with:
\begin{equation}
(\mathbf I-\gamma\widehat W)S = \mathbf I-\gamma^T\widehat W^T.
\end{equation}

Since $\widehat W^T=\mathbf 0$, this reduces to:
\begin{equation} 
(\mathbf I-\gamma\widehat W) S = \mathbf I. 
\end{equation} 

Hence, we have:
\begin{equation} 
 S = \mathbf I+\sum_{\tau=1}^{T-1} \gamma^\tau\widehat W^\tau = (\mathbf I-\gamma\widehat W)^{-1}.
\end{equation} 

Finally, subtracting $\mathbf I$ from both sides gives: 
\begin{equation} 
R = \sum_{\tau=1}^{T-1}\gamma^\tau\widehat W^\tau = (\mathbf I-\gamma\widehat W)^{-1}-\mathbf I,
\end{equation} 
as required.
\end{proof}

\section{\textsc{VTrace} Algorithm} \label{appendix:algo}
\textbf{Algorithm~\ref{alg:VTrace} gives the full procedure in pseudocode.} Its input is the cached state of a single forward pass over the frozen trace: the value vectors, the attention weights, and the residual stream before and after each attention block. 
\begin{itemize}
 \item Step 1 computes the modality-aware pairwise attribution matrix $W$ from the cached model states, as described in Section~\ref{sec:direct_attribution}.
 \item Step 2 normalizes $W$ to obtain $\widehat W$, computes the multi-hop attribution matrix $R$ in closed form, and scores each source token using its incoming attribution and outgoing attribution to the selected receivers, as described in Section~\ref{sec:multihop}.
 \item Step 3 is the modality calibration of Section~\ref{sec:modality_alignment}: three further forward passes measure the loss increase when the image, the text, or both are replaced by padding, the two modality weights are the Shapley values of this two player game, and the image scores are rescaled so that their share of the total matches the image weight while every within-modality order is left unchanged. This ensures the image and text tokens are well calibrated to be ranked fairly and faithfully.
\end{itemize}

\textbf{Calibration boundary conditions.}
Calibration is applied over image and query token positions when used in perturbation evaluation.
Calibration is applied only when both modality contributions from~\Cref{eq:modality_shapley} and both sums of token attribution scores are positive.
If any of the quantities is zero or negative, the original scores are maintained.
When applied, calibration multiplies image scores by the positive factor and leaves text scores unchanged.
The operation preserves the ranking within each modality, and matches their total attribution ratio to the estimated contribution ratio.

\begin{algorithm}[H]
\caption{\textsc{VTrace}}
\label{alg:VTrace}
\begin{lstlisting}[style=pyalg]
# Input:  cached states of one forward pass (V, a, x_attn, x_pre),
#         attention output projections Wo, receiver positions recv_pos,
#         image positions img_pos, user-text positions txt_pos, gamma
# Output: VTrace attribution u_VTrace

# 1. Compute the modality-aware pairwise attribution matrix
F[l,i,h] = V[l,i,h] @ Wo[l,h]        # per-head output of i
d[l,j]   = x_attn[l,j] - x_pre[l,j]

# center within modality for input tokens (image or prompt-text tokens)
F[l,i,h] -= mean_k(F[l,k,h], k in modality(i)), i in img_pos or txt_pos

e[l,i,j] = sum_h(a[l,h,j,i] * dot(F[l,i,h], d[l,j]))
W[i,j]   = mean_l(relu(e[l,i,j]) / (norm(d[l,j]) + eps)), i < j

# 2. Katz-based multi-hop attribution
W_hat = W / (max(W) + eps)
R = inv(I - gamma * W_hat) - I
inflow[r] = sum_i(R[i,r])
outflow[r] = sum(R[r,j] for j in recv_pos)
u[r] = (1 + inflow[r]) * outflow[r]
u[min(recv_pos):] = 0                # sources must precede the receivers

# 3. Align image and text attribution scores
# damage(pos) = logp(gen | clean) - logp(gen | pad at pos)
D_img, D_txt = damage(img_pos), damage(txt_pos)
D_both       = damage(img_pos + txt_pos)
phi_img = (D_img + D_both - D_txt) / 2
phi_txt = (D_txt + D_both - D_img) / 2
s = phi_img / (phi_img + phi_txt)     # image share
lam = (s / (1 - s)) * (sum(u[txt_pos]) / sum(u[img_pos]))

# calibrate the image contribution of the attribution
u_VTrace = u
u_VTrace[img_pos] = lam * u[img_pos] 
return u_VTrace

\end{lstlisting}
\end{algorithm}

\section{Baseline \& Benchmark Descriptions}
\label{appendix:baseline_and_benchmarks}

\subsection{Baseline Methods}
We compare \textsc{VTrace} with seven attribution methods:

\begin{itemize}
    \item \textbf{ReAGent}~\citep{reagent} is a perturbation method, which repeatedly replaces input tokens with plausible alternatives drawn from a masked language model and scores each token by the change this causes in the probability of the generated text.
    \item \textbf{HETA}~\citep{HETA} is a gradient method that adds second order terms from the Hessian to first order gradient attribution, so that interactions between input tokens contribute to their scores. We follow the released implementation.
    \item \textbf{Attention Rollout}~\citep{attn_rollout} composes head-averaged attention across layers. At each layer, attention is combined with the identity matrix as $0.5A^{(\ell)}+0.5I$ and row-normalized. The resulting layer products provide cumulative input-to-answer attribution.
    \item \textbf{AttnLRP}~\citep{attnLRP} applies layer-wise relevance propagation through attention blocks with relevance-conserving rules for softmax and attention matrix multiplication. It requires one backward pass per attributed target token, so its cost increases with response length.
    \item \textbf{IFR}~\citep{IFR} builds on ALTI~\citep{alti} to decompose the Transformer into token-level information flows. It measures how each token contributes to subsequent representations through attention and MLP blocks, tracing these contributions layer by layer.
    \item \textbf{FlowTracer}~\citep{flowtracer} constructs a flow network from head-averaged attention over a selected layer range and scores each input token by the flow it carries to the answer tokens.
    \item \textbf{FlashTrace}~\citep{flashtrace} uses span-wise aggregation and recursive attribution to trace importance through intermediate reasoning. At each hop, important reasoning tokens become weighted targets for the next attribution step, propagating influence backward toward the original input. Attribution across hops is then aggregated into the final input scores.
\end{itemize}

\subsection{Benchmark Statistics}
We evaluate our method on diverse multimodal benchmarks. These benchmarks cover a broad range of multimodal capabilities, ranging from general visual perception to knowledge understanding and mathematical problem solving.
A brief introduction for each benchmark is provided below.

\begin{itemize}
    \item \textbf{MMStar}~\citep{mmstar} is a vision-indispensable benchmark designed to evaluate multimodal capabilities of LVLMs while reducing the text-only shortcuts effect. It contains 1500 samples covering six core capabilities: coarse perception, fine-grained perception, instance reasoning, logical reasoning, science and technology, and mathematics.
    \item \textbf{MathVista}~\citep{mathvista} is a benchmark for mathematical reasoning in visual contexts. Its questions cover figure question answering, geometry problem solving, math word problems, textbook question answering and visual question answering, and span seven reasoning types: algebraic, arithmetic, geometric, logical, numeric commonsense, scientific and statistical reasoning. We use the \emph{testmini} split of 1,000 samples.
    \item \textbf{MathVerse}~\citep{mathverse} evaluates visual mathematical reasoning using problems from plane geometry, solid geometry, and functions. Each problem is provided in six variants with different amounts of textual and visual information. We use the \emph{testmini} split.
    \item \textbf{MMMU}~\citep{MMMU} consists of college level multimodal questions collected from exams, quizzes and textbooks. It spans six disciplines: art and design, business, science, health and medicine, humanities and social science, and technology and engineering, across 30 subjects and 30 image types such as charts, diagrams, tables, chemical structures and medical images. We use the validation split of 900 samples.
    \item \textbf{MMMU-Pro}~\citep{mmmu-pro} extends MMMU to provide a more challenging evaluation of multimodal reasoning. It removes questions that text-only models answer correctly, expands the candidate options, and adds a vision-only setting in which the question is embedded in the image, so that answering requires reading the image.
    \item \textbf{VisualPuzzles}~\citep{visualPuzzles} is a benchmark that decouples multimodal reasoning from domain knowledge. It contains 1,168 puzzles adapted from logical reasoning questions of the Chinese Civil Service Examination, covering five reasoning categories: algorithmic, analogical, deductive, inductive and spatial reasoning.
\end{itemize}
We additionally test three visual reasoning benchmarks for our post-training experiments.
\begin{itemize}
    \item \textbf{HallusionBench}~\citep{hallusionbench} evaluates hallucination and visual reasoning in LVLMs using 346 images and 1,129 human-designed questions. It particularly emphasis on failures caused by visual illusion and language hallucination.
    
    \item \textbf{RealWorldQA}~\citep{realworldqa} evaluates multimodal understanding in real-world scenes. The benchmark includes anonymized vehicle-view images and other natural scenes, with questions focusing particularly on spatial relationships, object properties, and everyday visual understanding.
    
    \item \textbf{MathVision}~\citep{mathvision} evaluates visual mathematical reasoning using problems from real mathematics competitions. It spans 16 mathematical disciplines and five difficulty levels, requiring models to jointly interpret visual content and perform advanced mathematical reasoning.
\end{itemize}

\section{Experimental Details}
\label{appendix:experimental_details}
\textbf{Models.} We evaluate on the publicly released Qwen3-VL~\citep{qwen3vl} checkpoints at 8B and 4B parameters and on InternVL3.5-8B~\citep{internvl_3.5}.

\textbf{Reasoning Trace Generation.} The benchmarks provide the image, the question and the reference answer, but not the reasoning trace that we attribute. We therefore run each model once per question with greedy decoding and a budget of 2,048 max tokens, and record the full context: the image tokens, the question, the generated reasoning and the final answer. This trace is then frozen. Every attribution method is evaluated on the same trace, and every perturbation in the evaluation measures the likelihood of the same trace, so this avoids any discrepancy caused by the generated reasoning traces. For \textsc{VTrace}, the decay rate $\gamma = 1$ by default. We highlight that we do not filter traces by answer correctness, and Table~\ref{tab:correctness_analysis} reports the metrics separately for correct and incorrect predictions.

\textbf{Hardware.} All token attribution-based experiments run on a single NVIDIA RTX PRO 6000 Blackwell GPU with 96GB of memory. Post-training based experiments (e.g., GRPO) are conducted on AMD Instinct MI355X GPUs.

\subsection{Attribution-guided Learning Details}
\label{appendix:attribution_guided training}
We conduct the attribution-guided experiments~\Cref{sec:experiments} using the \texttt{verl} framework~\cite{verl}. Specifically, we fine-tune Qwen3-VL-4B-Instruct with Group Relative Policy Optimization (GRPO).
For each training sample, we generate $n=8$ rollout responses with a temperature of $0.7$ and top-$p=0.95$.

The training data are constructed by combining and filtering publicly available multimodal reasoning data from MMR1-RL~\cite{mmr1} and the reinforcement-learning split of ReVisual-R1~\cite{revisualr1}.
We follow the targeted-RL weighting strategy of~\cite{flowtracer}, where we use \textsc{VTrace} attribution scores to
identify the most important response tokens. The top $40\%$ of response tokens are assigned a weight of $1.5$, while all remaining tokens retain the default weight of $1.0$. All models are trained for 2 epochs under the same training configuration.

\section{Evaluation Details}
\label{appendix:evaluaton_details}
Because there are no token-level ground-truth labels indicating which image regions or question tokens contribute to a generated reasoning trace, we evaluate attribution using perturbation-based metrics. Following prior attribution work, we use RISE~\citep{rise} and MAS~\citep{MAS}, considering both their deletion and insertion variants. These metrics progressively remove or restore input tokens according to their attribution scores and measure how the model's confidence in the generated trace changes.

\subsection{Perturbation Protocol}
\textbf{Deletable Tokens.} We perturb only tokens from the multimodal input. Let $\mathcal{I}$ and $\mathcal{T}$ denote the image-token and question-token positions, respectively. We consider two evaluation settings: \textbf{image}, where only tokens in $\mathcal{I}$ are perturbed, and \textbf{joint}, where tokens in both $\mathcal{I}$ and $\mathcal{T}$ are perturbed. Chat-template tokens are excluded, and generated tokens are kept fixed because they constitute the reasoning trace being explained.

\textbf{Perturbation schedule.}
For image tokens, perturbation is performed in pixel space. We first construct a Gaussian-blurred version of the image and replace the image regions corresponding to selected visual tokens with their blurred counterparts. The modified image is re-encoded by the LVLM at each perturbation step. For question tokens, selected tokens are replaced with the padding token.

Moreover, perturbing one token at a time would require a separate forward pass for every token. Following FlashTrace~\citep{flashtrace}, we therefore evaluate attribution in proportional steps. Given attribution scores $u$, the $P$ deletable tokens are ranked by decreasing attribution magnitude $|u_i|$ and partitioned into $K=\min(20,P)$ approximately equal-sized groups. Each evaluation thus requires $K+1$ forward passes.
\begin{itemize}
    \item For \textbf{deletion}, evaluation starts from the original input and progressively perturbs groups from highest to lowest attribution. A faithful attribution should therefore cause the \textbf{model response to decrease rapidly}.
    \item For \textbf{insertion}, evaluation starts from the fully perturbed input and progressively restores the same groups in the same order. A faithful attribution should \textbf{recover the model response rapidly}.
\end{itemize}

\subsection{Metric Definitions}
Let
$$ f(X) = \exp\left( \frac{1}{n_{\mathrm{gen}}} \sum_{t=1}^{n_{\mathrm{gen}}} \log p_\theta(y_t\mid X,y_{<t}) \right) $$
denote the length-normalised likelihood of the original generated trace under context $X$, as defined in Section $\ref{sec:implementation_details}$.

Let $\pi$ denote the ordering of the $P$ deletable tokens by decreasing attribution magnitude $|u_i|$, with attribution scores $u$. We define $X_{\mathrm{del}}^{(k)}$ as the input after perturbing the first $k$ groups under this ordering, and $X_{\mathrm{ins}}^{(k)}$ as the fully perturbed input after restoring the first $k$ groups, for $k=0,\ldots,K$. We have:
$$
f_k^{\mathrm{del}}
=
f\left(X_{\mathrm{del}}^{(k)}\right),
\qquad
f_k^{\mathrm{ins}}
=
f\left(X_{\mathrm{ins}}^{(k)}\right).
$$

We normalize each response curve between its fully perturbed and clean endpoints and enforce its expected monotonic direction:

$$
r_k^{\mathrm{del}}
=
\min_{j\leq k}
\frac{
f_j^{\mathrm{del}}-f_K^{\mathrm{del}}
}{
f_0^{\mathrm{del}}-f_K^{\mathrm{del}}
},
\qquad
r_k^{\mathrm{ins}}
=
\max_{j\leq k}
\frac{
f_j^{\mathrm{ins}}-f_0^{\mathrm{ins}}
}{
f_K^{\mathrm{ins}}-f_0^{\mathrm{ins}}
}.
$$

This allows the normalized responses to be bounded to $[0,1]$. We can define our metrics as:

\subsubsection{RISE}
RISE~\citep{rise} measures the area under the normalized perturbation curve:
$$
\mathrm{RISE}_{\mathrm{del}}
=
\mathrm{AUC}\left(r^{\mathrm{del}}\right)
\downarrow,
\qquad
\mathrm{RISE}_{\mathrm{ins}}
=
\mathrm{AUC}\left(r^{\mathrm{ins}}\right)
\uparrow.
$$
A faithful attribution should remove important evidence early under deletion, causing the response to decrease rapidly, and restore it early under insertion, causing the response to recover rapidly. Thus, \textbf{lower deletion RISE} and \textbf{higher insertion RISE} indicate \textbf{better faithfulness}. Notably, RISE depends only on the attribution ranking and does not account for attribution magnitude.

\subsubsection{MAS}
Magnitude Aligned Scoring (MAS)~\citep{MAS} additionally evaluates whether attribution magnitude agrees with the observed model response. Let $\mathcal{G}_k$ denote the token group modified at step $k$. The fraction of attribution mass restored after insertion step $k$ is given by:

$$
m_k^{\mathrm{ins}}
=
\frac{
\sum_{j=1}^{k}
\sum_{i\in\mathcal{G}_j}|u_i|
}{
\sum_i |u_i|
},
$$
while the mass remaining after deletion is
$
m_k^{\mathrm{del}}
=
1-m_k^{\mathrm{ins}}.
$
The mismatch between attribution mass and model response is thus given by:
$$
d_k^{\mathrm{del}}
=
\left|
r_k^{\mathrm{del}}-m_k^{\mathrm{del}}
\right|,
\qquad
d_k^{\mathrm{ins}}
=
\left|
r_k^{\mathrm{ins}}-m_k^{\mathrm{ins}}
\right|.
$$

MAS incorporates this mismatch into the corresponding perturbation curve:

$$
\mathrm{MAS}_{\mathrm{del}}
=
\mathrm{AUC}\left(
r^{\mathrm{del}}+d^{\mathrm{del}}
\right)
\downarrow,
\qquad
\mathrm{MAS}_{\mathrm{ins}}
=
\mathrm{AUC}\left(
r^{\mathrm{ins}}-d^{\mathrm{ins}}
\right)
\uparrow.
$$

MAS therefore considers both attribution ranking and magnitude, where a faithful attribution should assign attribution mass in proportion to the observed change in model response. As with RISE, \textbf{lower deletion} and \textbf{higher insertion scores} indicate \textbf{better faithfulness}. In implementation, the normalized response and MAS curves are clipped to $[0,1]$, and AUC is computed using the trapezoidal rule.

\subsection{System Prompt}
A fixed system prompt is used to generate reasoning traces. Specifically, the system prompt instructs the model to reason step by step over the visual evidence before providing the final answer.

\begin{promptbox}{Prompt Template for Reasoning Trace Generation}

\textbf{System:}

\smallskip
Look at the image carefully and reason step by step, then end with a line 'Final answer: <answer>'.

\medskip
\textbf{User:}

\smallskip
\texttt{\{image\}} \texttt{\{question\}}

\end{promptbox}

\section{Extended Experiments} \label{appendix:extended_experiments}
\subsection{Main Results}
\label{appendix:main_results}
\Cref{tab:appendix_image_faithfulness_blur} and~\Cref{tab:appendix_joint_faithfulness} report the complete Image-variant and Joint-variant faithfulness results with both RISE and MAS under insertion and deletion on Qwen3-VL-8B. \textsc{VTrace} consistently outperforms the baselines for both the image and joint variants. \textsc{VTrace} improves MAS insertion scores by 9.9\% and 18.4\% for image and join variant respectively, and reduces the deletion scores by 7.7\% and 29.6\%. The results turn out \textsc{VTrace} manages to identify more faithful tokens that are important for constructing the reasoning.

\begin{table}[H]
\centering
\caption{
Attribution faithfulness of the \textbf{Image variant} on Qwen3-VL-8B across six benchmarks. We report RISE and MAS insertion (\textcolor{teal}{$_{\text{ins}}\uparrow$}, higher is better) and deletion (\textcolor{BurntOrange}{$_{\text{del}}\downarrow$}, lower is better) AUC and misalignment scores. The Image variant perturbs image patch tokens only.
Best results are highlighted in \textcolor{Maroon}{\textbf{red bold}}, and second-best
results are highlighted in \textcolor{NavyBlue}{\underline{blue underlining}}.
}
\label{tab:appendix_image_faithfulness_blur}
\resizebox{\linewidth}{!}{
\renewcommand{\arraystretch}{1.15}
\begin{tabular}{llcccccccc}
\toprule
Dataset & Metric
& \textbf{ReAGent}
& \textbf{HETA}
& \textbf{FlowTracer}
& \textbf{IFR}
& \textbf{Attn Rollout}
& \textbf{AttnLRP}
& \textbf{FlashTrace}
& \cellcolor{blue!10}\textbf{\textsc{VTrace}}\\
\midrule

\multirow{4}{*}{MMStar}
& RISE\textcolor{teal}{$_{\text{ins}}\uparrow$}
& 0.505
& 0.497
& 0.532
& 0.539
& 0.539
& \underline{\textcolor{NavyBlue}{0.563}}
& 0.555
& \cellcolor{blue!10}\textcolor{Maroon}{\textbf{0.600}} \\

& MAS\textcolor{teal}{$_{\text{ins}}\uparrow$}
& 0.336
& 0.329
& 0.368
& 0.384
& 0.399
& 0.392
& \underline{\textcolor{NavyBlue}{0.410}}
& \cellcolor{blue!10}\textcolor{Maroon}{\textbf{0.460}} \\

& RISE\textcolor{BurntOrange}{$_{\text{del}}\downarrow$}
& 0.458
& 0.463
& 0.412
& 0.409
& 0.439
& \underline{\textcolor{NavyBlue}{0.384}}
& 0.394
& \cellcolor{blue!10}\textcolor{Maroon}{\textbf{0.352}} \\

& MAS\textcolor{BurntOrange}{$_{\text{del}}\downarrow$}
& 0.620
& 0.625
& 0.563
& 0.551
& 0.566
& 0.547
& \underline{\textcolor{NavyBlue}{0.528}}
& \cellcolor{blue!10}\textcolor{Maroon}{\textbf{0.489}} \\
\cmidrule(lr){1-10}

\multirow{4}{*}{MathVista}
& RISE\textcolor{teal}{$_{\text{ins}}\uparrow$}
& 0.500
& 0.514
& 0.579
& 0.591
& 0.584
& 0.599
& \underline{\textcolor{NavyBlue}{0.600}}
& \cellcolor{blue!10}\textcolor{Maroon}{\textbf{0.662}} \\

& MAS\textcolor{teal}{$_{\text{ins}}\uparrow$}
& 0.334
& 0.340
& 0.421
& 0.449
& 0.452
& 0.427
& \underline{\textcolor{NavyBlue}{0.464}}
& \cellcolor{blue!10}\textcolor{Maroon}{\textbf{0.535}} \\

& RISE\textcolor{BurntOrange}{$_{\text{del}}\downarrow$}
& 0.447
& 0.444
& 0.368
& 0.363
& 0.394
& 0.358
& \underline{\textcolor{NavyBlue}{0.354}}
& \cellcolor{blue!10}\textcolor{Maroon}{\textbf{0.311}} \\

& MAS\textcolor{BurntOrange}{$_{\text{del}}\downarrow$}
& 0.615
& 0.607
& 0.513
& 0.499
& 0.518
& 0.521
& \underline{\textcolor{NavyBlue}{0.482}}
& \cellcolor{blue!10}\textcolor{Maroon}{\textbf{0.446}} \\
\cmidrule(lr){1-10}

\multirow{4}{*}{MMMU}
& RISE\textcolor{teal}{$_{\text{ins}}\uparrow$}
& 0.543
& 0.564
& 0.595
& 0.610
& \underline{\textcolor{NavyBlue}{0.627}}
& 0.612
& 0.618
& \cellcolor{blue!10}\textcolor{Maroon}{\textbf{0.665}} \\

& MAS\textcolor{teal}{$_{\text{ins}}\uparrow$}
& 0.381
& 0.389
& 0.431
& 0.461
& \underline{\textcolor{NavyBlue}{0.500}}
& 0.444
& 0.479
& \cellcolor{blue!10}\textcolor{Maroon}{\textbf{0.536}} \\

& RISE\textcolor{BurntOrange}{$_{\text{del}}\downarrow$}
& 0.482
& 0.480
& 0.438
& 0.425
& 0.444
& 0.419
& \underline{\textcolor{NavyBlue}{0.417}}
& \cellcolor{blue!10}\textcolor{Maroon}{\textbf{0.379}} \\

& MAS\textcolor{BurntOrange}{$_{\text{del}}\downarrow$}
& 0.639
& 0.643
& 0.606
& 0.576
& 0.580
& 0.582
& \underline{\textcolor{NavyBlue}{0.561}}
& \cellcolor{blue!10}\textcolor{Maroon}{\textbf{0.522}} \\
\cmidrule(lr){1-10}

\multirow{4}{*}{MMMU-Pro}
& RISE\textcolor{teal}{$_{\text{ins}}\uparrow$}
& 0.527
& 0.542
& 0.570
& 0.572
& \underline{\textcolor{NavyBlue}{0.604}}
& 0.593
& 0.583
& \cellcolor{blue!10}\textcolor{Maroon}{\textbf{0.645}} \\

& MAS\textcolor{teal}{$_{\text{ins}}\uparrow$}
& 0.363
& 0.359
& 0.399
& 0.422
& \underline{\textcolor{NavyBlue}{0.468}}
& 0.428
& 0.432
& \cellcolor{blue!10}\textcolor{Maroon}{\textbf{0.511}} \\

& RISE\textcolor{BurntOrange}{$_{\text{del}}\downarrow$}
& 0.472
& 0.472
& 0.442
& 0.435
& 0.444
& \underline{\textcolor{NavyBlue}{0.413}}
& 0.429
& \cellcolor{blue!10}\textcolor{Maroon}{\textbf{0.378}} \\

& MAS\textcolor{BurntOrange}{$_{\text{del}}\downarrow$}
& 0.631
& 0.639
& 0.609
& 0.580
& 0.579
& \underline{\textcolor{NavyBlue}{0.569}}
& 0.575
& \cellcolor{blue!10}\textcolor{Maroon}{\textbf{0.517}} \\
\cmidrule(lr){1-10}

\multirow{4}{*}{MathVerse}
& RISE\textcolor{teal}{$_{\text{ins}}\uparrow$}
& 0.523
& 0.565
& 0.636
& 0.640
& 0.624
& 0.610
& \underline{\textcolor{NavyBlue}{0.644}}
& \cellcolor{blue!10}\textcolor{Maroon}{\textbf{0.684}} \\

& MAS\textcolor{teal}{$_{\text{ins}}\uparrow$}
& 0.359
& 0.380
& 0.475
& 0.509
& 0.498
& 0.445
& \underline{\textcolor{NavyBlue}{0.513}}
& \cellcolor{blue!10}\textcolor{Maroon}{\textbf{0.562}} \\

& RISE\textcolor{BurntOrange}{$_{\text{del}}\downarrow$}
& 0.454
& 0.430
& \underline{\textcolor{NavyBlue}{0.341}}
& 0.345
& 0.384
& 0.360
& \underline{\textcolor{NavyBlue}{0.341}}
& \cellcolor{blue!10}\textcolor{Maroon}{\textbf{0.296}} \\

& MAS\textcolor{BurntOrange}{$_{\text{del}}\downarrow$}
& 0.614
& 0.605
& 0.500
& 0.479
& 0.512
& 0.527
& \underline{\textcolor{NavyBlue}{0.476}}
& \cellcolor{blue!10}\textcolor{Maroon}{\textbf{0.435}} \\
\cmidrule(lr){1-10}

\multirow{4}{*}{VisualPuzzles}
& RISE\textcolor{teal}{$_{\text{ins}}\uparrow$}
& 0.456
& 0.461
& 0.468
& 0.511
& 0.525
& \underline{\textcolor{NavyBlue}{0.531}}
& 0.510
& \cellcolor{blue!10}\textcolor{Maroon}{\textbf{0.557}} \\

& MAS\textcolor{teal}{$_{\text{ins}}\uparrow$}
& 0.296
& 0.262
& 0.272
& 0.337
& \underline{\textcolor{NavyBlue}{0.354}}
& 0.346
& 0.333
& \cellcolor{blue!10}\textcolor{Maroon}{\textbf{0.374}} \\

& RISE\textcolor{BurntOrange}{$_{\text{del}}\downarrow$}
& 0.409
& 0.405
& 0.379
& 0.354
& 0.380
& \underline{\textcolor{NavyBlue}{0.324}}
& 0.358
& \cellcolor{blue!10}\textcolor{Maroon}{\textbf{0.309}} \\

& MAS\textcolor{BurntOrange}{$_{\text{del}}\downarrow$}
& 0.566
& 0.613
& 0.582
& 0.528
& 0.522
& \underline{\textcolor{NavyBlue}{0.489}}
& 0.537
& \cellcolor{blue!10}\textcolor{Maroon}{\textbf{0.456}} \\

\bottomrule
\end{tabular}
}
\end{table}

\begin{table}[H]
\centering
\caption{
Attribution faithfulness of the \textbf{Joint variant} on Qwen3-VL-8B across six benchmarks. The Joint variant perturbs both image and text tokens.
}
\resizebox{\linewidth}{!}{
\renewcommand{\arraystretch}{1.15}
\begin{tabular}{llcccccccc}
\toprule
Dataset & Metric
& \textbf{ReAGent}
& \textbf{HETA}
& \textbf{FlowTracer}
& \textbf{IFR}
& \textbf{Attn Rollout}
& \textbf{AttnLRP}
& \textbf{FlashTrace}
& \cellcolor{blue!10}\textbf{\textsc{VTrace}} \\
\midrule

\multirow{4}{*}{MMStar}
& RISE\textcolor{teal}{$_{\text{ins}}\uparrow$}
& 0.371
& 0.489
& 0.506
& 0.507
& 0.501
& 0.529
& \underline{\textcolor{NavyBlue}{0.554}}
& \cellcolor{blue!10}\textcolor{Maroon}{\textbf{0.581}} \\

& MAS\textcolor{teal}{$_{\text{ins}}\uparrow$}
& 0.165
& 0.277
& 0.299
& 0.305
& 0.314
& 0.314
& \underline{\textcolor{NavyBlue}{0.349}}
& \cellcolor{blue!10}\textcolor{Maroon}{\textbf{0.397}} \\

& RISE\textcolor{BurntOrange}{$_{\text{del}}\downarrow$}
& 0.325
& 0.245
& 0.226
& 0.222
& 0.273
& 0.219
& \underline{\textcolor{NavyBlue}{0.204}}
& \cellcolor{blue!10}\textcolor{Maroon}{\textbf{0.180}} \\

& MAS\textcolor{BurntOrange}{$_{\text{del}}\downarrow$}
& 0.494
& 0.408
& 0.372
& 0.364
& 0.384
& 0.341
& \underline{\textcolor{NavyBlue}{0.332}}
& \cellcolor{blue!10}\textcolor{Maroon}{\textbf{0.246}} \\
\cmidrule(lr){1-10}

\multirow{4}{*}{MathVista}
& RISE\textcolor{teal}{$_{\text{ins}}\uparrow$}
& 0.382
& 0.467
& 0.517
& 0.516
& 0.501
& 0.512
& \underline{\textcolor{NavyBlue}{0.574}}
& \cellcolor{blue!10}\textcolor{Maroon}{\textbf{0.647}} \\

& MAS\textcolor{teal}{$_{\text{ins}}\uparrow$}
& 0.186
& 0.268
& 0.316
& 0.324
& 0.327
& 0.304
& \underline{\textcolor{NavyBlue}{0.377}}
& \cellcolor{blue!10}\textcolor{Maroon}{\textbf{0.497}} \\

& RISE\textcolor{BurntOrange}{$_{\text{del}}\downarrow$}
& 0.345
& 0.307
& 0.274
& 0.269
& 0.331
& 0.274
& \underline{\textcolor{NavyBlue}{0.241}}
& \cellcolor{blue!10}\textcolor{Maroon}{\textbf{0.178}} \\

& MAS\textcolor{BurntOrange}{$_{\text{del}}\downarrow$}
& 0.521
& 0.494
& 0.435
& 0.427
& 0.464
& 0.422
& \underline{\textcolor{NavyBlue}{0.379}}
& \cellcolor{blue!10}\textcolor{Maroon}{\textbf{0.246}} \\
\cmidrule(lr){1-10}

\multirow{4}{*}{MMMU}
& RISE\textcolor{teal}{$_{\text{ins}}\uparrow$}
& 0.368
& 0.589
& 0.583
& 0.608
& 0.604
& 0.615
& \underline{\textcolor{NavyBlue}{0.629}}
& \cellcolor{blue!10}\textcolor{Maroon}{\textbf{0.660}} \\

& MAS\textcolor{teal}{$_{\text{ins}}\uparrow$}
& 0.170
& 0.394
& 0.393
& 0.429
& \underline{\textcolor{NavyBlue}{0.450}}
& 0.421
& 0.436
& \cellcolor{blue!10}\textcolor{Maroon}{\textbf{0.503}} \\

& RISE\textcolor{BurntOrange}{$_{\text{del}}\downarrow$}
& 0.315
& 0.216
& 0.204
& 0.195
& 0.241
& 0.191
& \underline{\textcolor{NavyBlue}{0.178}}
& \cellcolor{blue!10}\textcolor{Maroon}{\textbf{0.159}} \\

& MAS\textcolor{BurntOrange}{$_{\text{del}}\downarrow$}
& 0.468
& 0.356
& 0.343
& 0.323
& 0.334
& 0.294
& \underline{\textcolor{NavyBlue}{0.292}}
& \cellcolor{blue!10}\textcolor{Maroon}{\textbf{0.215}} \\
\cmidrule(lr){1-10}

\multirow{4}{*}{MMMU-Pro}
& RISE\textcolor{teal}{$_{\text{ins}}\uparrow$}
& 0.371
& 0.558
& 0.532
& 0.539
& 0.554
& 0.559
& \underline{\textcolor{NavyBlue}{0.584}}
& \cellcolor{blue!10}\textcolor{Maroon}{\textbf{0.619}} \\

& MAS\textcolor{teal}{$_{\text{ins}}\uparrow$}
& 0.166
& 0.348
& 0.338
& 0.351
& 0.379
& 0.349
& \underline{\textcolor{NavyBlue}{0.384}}
& \cellcolor{blue!10}\textcolor{Maroon}{\textbf{0.446}} \\

& RISE\textcolor{BurntOrange}{$_{\text{del}}\downarrow$}
& 0.322
& 0.241
& 0.242
& 0.242
& 0.269
& 0.229
& \underline{\textcolor{NavyBlue}{0.212}}
& \cellcolor{blue!10}\textcolor{Maroon}{\textbf{0.181}} \\

& MAS\textcolor{BurntOrange}{$_{\text{del}}\downarrow$}
& 0.477
& 0.401
& 0.403
& 0.402
& 0.377
& 0.355
& \underline{\textcolor{NavyBlue}{0.350}}
& \cellcolor{blue!10}\textcolor{Maroon}{\textbf{0.246}} \\
\cmidrule(lr){1-10}

\multirow{4}{*}{MathVerse}
& RISE\textcolor{teal}{$_{\text{ins}}\uparrow$}
& 0.383
& 0.495
& 0.539
& 0.540
& 0.513
& 0.547
& \underline{\textcolor{NavyBlue}{0.568}}
& \cellcolor{blue!10}\textcolor{Maroon}{\textbf{0.606}} \\

& MAS\textcolor{teal}{$_{\text{ins}}\uparrow$}
& 0.193
& 0.297
& 0.337
& 0.345
& 0.345
& 0.331
& \underline{\textcolor{NavyBlue}{0.350}}
& \cellcolor{blue!10}\textcolor{Maroon}{\textbf{0.434}} \\

& RISE\textcolor{BurntOrange}{$_{\text{del}}\downarrow$}
& 0.330
& 0.314
& 0.279
& 0.283
& 0.342
& 0.274
& \underline{\textcolor{NavyBlue}{0.241}}
& \cellcolor{blue!10}\textcolor{Maroon}{\textbf{0.187}} \\

& MAS\textcolor{BurntOrange}{$_{\text{del}}\downarrow$}
& 0.489
& 0.508
& 0.438
& 0.447
& 0.485
& 0.416
& \underline{\textcolor{NavyBlue}{0.375}}
& \cellcolor{blue!10}\textcolor{Maroon}{\textbf{0.254}} \\
\cmidrule(lr){1-10}

\multirow{4}{*}{VisualPuzzles}
& RISE\textcolor{teal}{$_{\text{ins}}\uparrow$}
& 0.376
& 0.488
& 0.477
& 0.504
& 0.523
& 0.529
& \underline{\textcolor{NavyBlue}{0.530}}
& \cellcolor{blue!10}\textcolor{Maroon}{\textbf{0.571}} \\

& MAS\textcolor{teal}{$_{\text{ins}}\uparrow$}
& 0.187
& 0.264
& 0.260
& 0.303
& \underline{\textcolor{NavyBlue}{0.323}}
& 0.321
& 0.319
& \cellcolor{blue!10}\textcolor{Maroon}{\textbf{0.366}} \\

& RISE\textcolor{BurntOrange}{$_{\text{del}}\downarrow$}
& 0.348
& 0.288
& 0.291
& 0.283
& 0.283
& \underline{\textcolor{NavyBlue}{0.255}}
& 0.268
& \cellcolor{blue!10}\textcolor{Maroon}{\textbf{0.207}} \\

& MAS\textcolor{BurntOrange}{$_{\text{del}}\downarrow$}
& 0.494
& 0.487
& 0.479
& 0.459
& 0.430
& \underline{\textcolor{NavyBlue}{0.400}}
& 0.434
& \cellcolor{blue!10}\textcolor{Maroon}{\textbf{0.292}} \\

\bottomrule
\end{tabular}
}
\label{tab:appendix_joint_faithfulness}
\end{table}

\subsection{Hyperparameter Sensitivity Study}
\label{appendix:hyperparameter_ifr}

\begin{figure}[h]
    \centering
    \includegraphics[width=\linewidth]{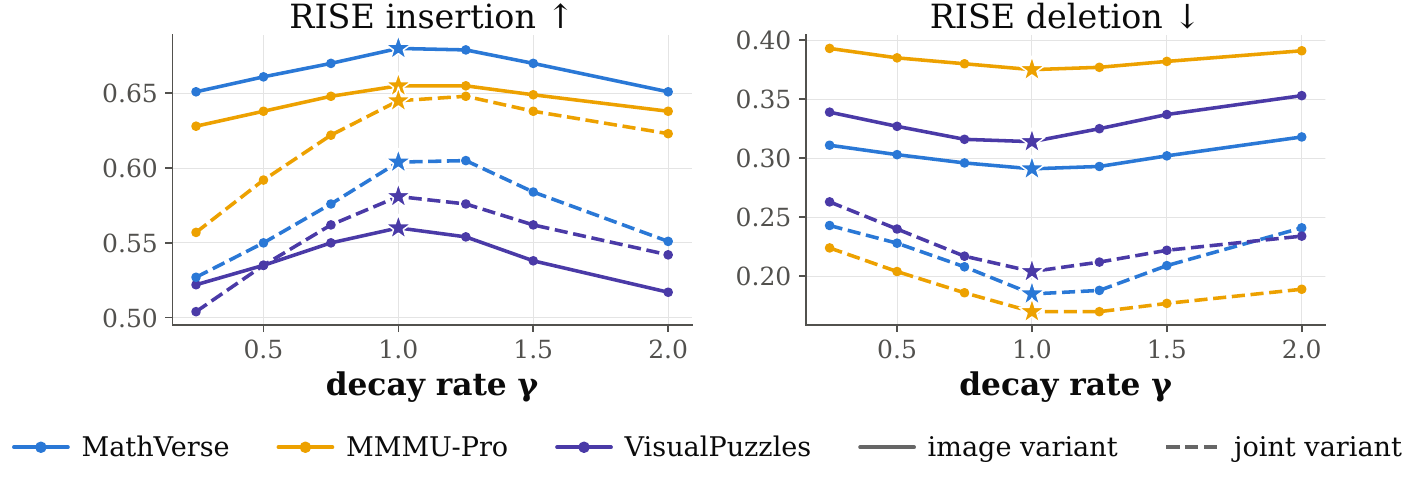}
    \caption{Sensitivity analysis for \textsc{VTrace} to the decay rate~$\gamma$.}
    \label{fig:ablation_hparam_VTrace}
\end{figure}

\begin{figure*}[h]
    \centering
    \includegraphics[width=\linewidth]{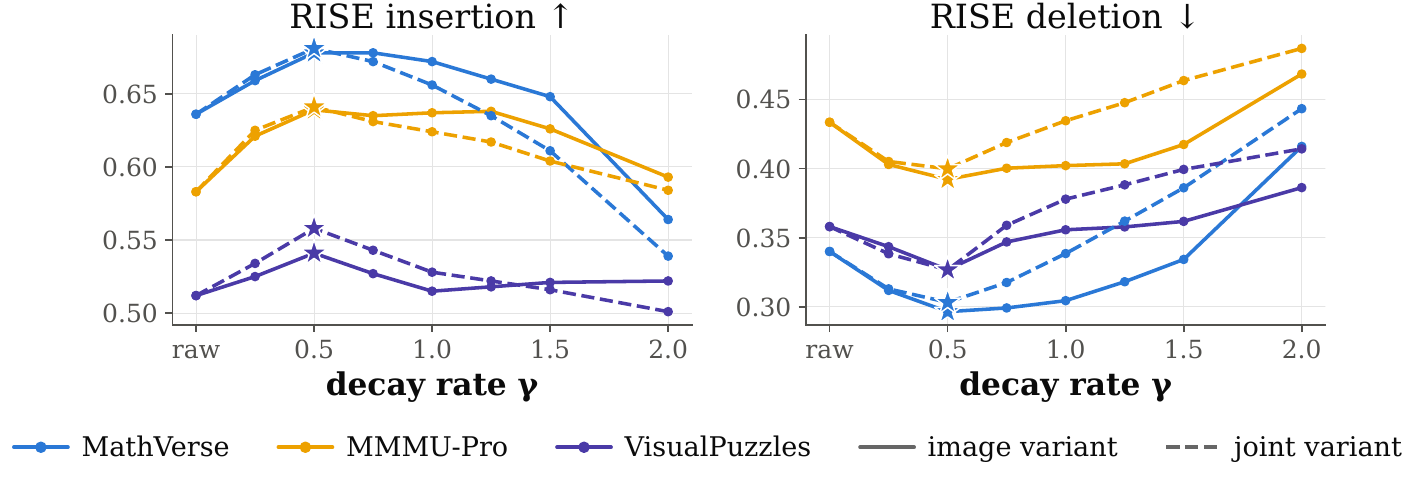}

    \caption{Sensitivity analysis for IFR to the decay rate~$\gamma$.}
    \label{fig:ablation_hparam_ifr} 
\end{figure*}

We conduct sensitivity study for decay rate $\gamma$. As shown in~\Cref{fig:ablation_hparam_VTrace}, $\gamma=1.0$ achieves the strongest performance across benchmarks, indicating that propagating attribution through intermediate tokens helps recover source evidence missed by direct attribution.
To examine whether the proposed multihop aggregation generalizes beyond \textsc{VTrace}, we additionally apply it to the baseline method IFR~\cite{IFR}. As shown in~\Cref{fig:ablation_hparam_ifr}, IFR exhibits a similar sensitivity pattern, with the strongest overall performance around $\gamma=0.5$. The results indicate that the proposed multi-hop aggregation can also enhance existing attribution methods.

\subsection{Full Results on Generalizability Across LVLM Families and Sizes}
\label{appendix:generalizability_experiments}
We report the complete attribution results on InternVL3.5-2B, Qwen3-VL-4B, and InternVL3.5-8B in \Cref{tab:appendix_generalizability_qwen4b,tab:appendix_generalizability_internvl2b,tab:appendix_generalizability_internvl8b}. The evaluation includes both RISE and MAS scores under the Image and Joint variant. Across different LVLM families and model scales, \textsc{VTRACE} consistently outperforms the baseline attribution methods and demonstrates strong generalizability.

\begin{table}[t]
\centering
\caption{
Attribution faithfulness on \textbf{Qwen3-VL-4B}.
}
\label{tab:appendix_generalizability_qwen4b}
\resizebox{\linewidth}{!}{
\renewcommand{\arraystretch}{1.15}
\begin{tabular}{llcccccccc}
\toprule
Dataset & Metric
& \textbf{ReAGent}
& \textbf{HETA}
& \textbf{FlowTracer}
& \textbf{IFR}
& \textbf{Attn Rollout}
& \textbf{AttnLRP}
& \textbf{FlashTrace}
& \cellcolor{blue!10}\textbf{\textsc{VTrace}}\\
\midrule
\multicolumn{10}{l}{\textit{Image variant}}\\
\midrule

\multirow{4}{*}{MMStar}
& RISE\textcolor{teal}{$_{\text{ins}}\uparrow$}
& 0.531
& 0.531
& 0.531
& 0.525
& 0.555
& \underline{\textcolor{NavyBlue}{0.575}}
& 0.539
& \cellcolor{blue!10}\textcolor{Maroon}{\textbf{0.614}} \\
& MAS\textcolor{teal}{$_{\text{ins}}\uparrow$}
& 0.382
& 0.364
& 0.368
& 0.376
& \underline{\textcolor{NavyBlue}{0.421}}
& 0.419
& 0.399
& \cellcolor{blue!10}\textcolor{Maroon}{\textbf{0.482}} \\
& RISE\textcolor{BurntOrange}{$_{\text{del}}\downarrow$}
& 0.480
& 0.453
& 0.454
& 0.462
& 0.452
& \underline{\textcolor{NavyBlue}{0.418}}
& 0.447
& \cellcolor{blue!10}\textcolor{Maroon}{\textbf{0.386}} \\
& MAS\textcolor{BurntOrange}{$_{\text{del}}\downarrow$}
& 0.631
& 0.620
& 0.614
& 0.609
& 0.586
& \underline{\textcolor{NavyBlue}{0.577}}
& 0.587
& \cellcolor{blue!10}\textcolor{Maroon}{\textbf{0.531}} \\
\cmidrule(lr){1-10}

\multirow{4}{*}{MathVerse}
& RISE\textcolor{teal}{$_{\text{ins}}\uparrow$}
& 0.490
& 0.600
& 0.612
& 0.611
& \underline{\textcolor{NavyBlue}{0.618}}
& 0.596
& 0.617
& \cellcolor{blue!10}\textcolor{Maroon}{\textbf{0.673}} \\
& MAS\textcolor{teal}{$_{\text{ins}}\uparrow$}
& 0.321
& 0.434
& 0.453
& 0.474
& \underline{\textcolor{NavyBlue}{0.498}}
& 0.438
& 0.483
& \cellcolor{blue!10}\textcolor{Maroon}{\textbf{0.553}} \\
& RISE\textcolor{BurntOrange}{$_{\text{del}}\downarrow$}
& 0.411
& 0.294
& 0.288
& 0.285
& 0.313
& 0.290
& \underline{\textcolor{NavyBlue}{0.277}}
& \cellcolor{blue!10}\textcolor{Maroon}{\textbf{0.230}} \\
& MAS\textcolor{BurntOrange}{$_{\text{del}}\downarrow$}
& 0.570
& 0.425
& 0.415
& 0.398
& 0.417
& 0.422
& \underline{\textcolor{NavyBlue}{0.388}}
& \cellcolor{blue!10}\textcolor{Maroon}{\textbf{0.345}} \\
\cmidrule(lr){1-10}

\multirow{4}{*}{VisualPuzzles}
& RISE\textcolor{teal}{$_{\text{ins}}\uparrow$}
& 0.470
& 0.463
& 0.441
& 0.445
& 0.514
& \underline{\textcolor{NavyBlue}{0.521}}
& 0.457
& \cellcolor{blue!10}\textcolor{Maroon}{\textbf{0.551}} \\
& MAS\textcolor{teal}{$_{\text{ins}}\uparrow$}
& 0.319
& 0.267
& 0.247
& 0.256
& \underline{\textcolor{NavyBlue}{0.350}}
& 0.335
& 0.270
& \cellcolor{blue!10}\textcolor{Maroon}{\textbf{0.379}} \\
& RISE\textcolor{BurntOrange}{$_{\text{del}}\downarrow$}
& 0.425
& 0.381
& 0.406
& 0.405
& 0.379
& \underline{\textcolor{NavyBlue}{0.345}}
& 0.398
& \cellcolor{blue!10}\textcolor{Maroon}{\textbf{0.321}} \\
& MAS\textcolor{BurntOrange}{$_{\text{del}}\downarrow$}
& 0.556
& 0.599
& 0.624
& 0.616
& 0.528
& \underline{\textcolor{NavyBlue}{0.519}}
& 0.602
& \cellcolor{blue!10}\textcolor{Maroon}{\textbf{0.470}} \\
\midrule
\multicolumn{10}{l}{\textit{Joint variant}}\\
\midrule

\multirow{4}{*}{MMStar}
& RISE\textcolor{teal}{$_{\text{ins}}\uparrow$}
& 0.335
& 0.510
& 0.503
& 0.503
& 0.491
& 0.516
& \underline{\textcolor{NavyBlue}{0.526}}
& \cellcolor{blue!10}\textcolor{Maroon}{\textbf{0.548}} \\
& MAS\textcolor{teal}{$_{\text{ins}}\uparrow$}
& 0.149
& 0.310
& 0.300
& 0.301
& \underline{\textcolor{NavyBlue}{0.326}}
& 0.314
& 0.317
& \cellcolor{blue!10}\textcolor{Maroon}{\textbf{0.369}} \\
& RISE\textcolor{BurntOrange}{$_{\text{del}}\downarrow$}
& 0.289
& 0.192
& 0.181
& 0.183
& 0.240
& 0.169
& \underline{\textcolor{NavyBlue}{0.166}}
& \cellcolor{blue!10}\textcolor{Maroon}{\textbf{0.157}} \\
& MAS\textcolor{BurntOrange}{$_{\text{del}}\downarrow$}
& 0.417
& 0.311
& 0.296
& 0.304
& 0.329
& \underline{\textcolor{NavyBlue}{0.249}}
& 0.274
& \cellcolor{blue!10}\textcolor{Maroon}{\textbf{0.214}} \\
\cmidrule(lr){1-10}

\multirow{4}{*}{MathVerse}
& RISE\textcolor{teal}{$_{\text{ins}}\uparrow$}
& 0.323
& 0.533
& 0.550
& 0.554
& 0.480
& 0.534
& \underline{\textcolor{NavyBlue}{0.559}}
& \cellcolor{blue!10}\textcolor{Maroon}{\textbf{0.579}} \\
& MAS\textcolor{teal}{$_{\text{ins}}\uparrow$}
& 0.142
& 0.345
& 0.344
& \underline{\textcolor{NavyBlue}{0.356}}
& 0.326
& 0.328
& 0.333
& \cellcolor{blue!10}\textcolor{Maroon}{\textbf{0.398}} \\
& RISE\textcolor{BurntOrange}{$_{\text{del}}\downarrow$}
& 0.248
& 0.195
& 0.171
& 0.173
& 0.305
& 0.174
& \underline{\textcolor{NavyBlue}{0.148}}
& \cellcolor{blue!10}\textcolor{Maroon}{\textbf{0.116}} \\
& MAS\textcolor{BurntOrange}{$_{\text{del}}\downarrow$}
& 0.403
& 0.310
& 0.260
& 0.266
& 0.414
& 0.245
& \underline{\textcolor{NavyBlue}{0.229}}
& \cellcolor{blue!10}\textcolor{Maroon}{\textbf{0.198}} \\
\cmidrule(lr){1-10}

\multirow{4}{*}{VisualPuzzles}
& RISE\textcolor{teal}{$_{\text{ins}}\uparrow$}
& 0.355
& 0.505
& 0.464
& 0.481
& 0.514
& \underline{\textcolor{NavyBlue}{0.520}}
& 0.489
& \cellcolor{blue!10}\textcolor{Maroon}{\textbf{0.546}} \\
& MAS\textcolor{teal}{$_{\text{ins}}\uparrow$}
& 0.174
& 0.281
& 0.243
& 0.260
& \underline{\textcolor{NavyBlue}{0.328}}
& 0.313
& 0.259
& \cellcolor{blue!10}\textcolor{Maroon}{\textbf{0.344}} \\
& RISE\textcolor{BurntOrange}{$_{\text{del}}\downarrow$}
& 0.320
& 0.236
& 0.262
& 0.261
& 0.243
& \underline{\textcolor{NavyBlue}{0.229}}
& 0.234
& \cellcolor{blue!10}\textcolor{Maroon}{\textbf{0.188}} \\
& MAS\textcolor{BurntOrange}{$_{\text{del}}\downarrow$}
& 0.446
& 0.405
& 0.446
& 0.443
& \underline{\textcolor{NavyBlue}{0.360}}
& 0.361
& 0.395
& \cellcolor{blue!10}\textcolor{Maroon}{\textbf{0.265}} \\

\bottomrule
\end{tabular}
}
\end{table}

\begin{table}[H]
\centering
\caption{
Attribution faithfulness on \textbf{InternVL3.5-2B}.
}
\label{tab:appendix_generalizability_internvl2b}
\resizebox{\linewidth}{!}{
\renewcommand{\arraystretch}{1.15}
\begin{tabular}{llcccccccc}
\toprule
Dataset & Metric
& \textbf{ReAGent}
& \textbf{HETA}
& \textbf{FlowTracer}
& \textbf{IFR}
& \textbf{Attn Rollout}
& \textbf{AttnLRP}
& \textbf{FlashTrace}
& \cellcolor{blue!10}\textbf{\textsc{VTrace}}\\
\midrule
\multicolumn{10}{l}{\textit{Image variant}}\\
\midrule

\multirow{4}{*}{MMStar}
& RISE\textcolor{teal}{$_{\text{ins}}\uparrow$}
& 0.523
& 0.639
& 0.592
& 0.614
& \underline{\textcolor{NavyBlue}{0.659}}
& 0.618
& 0.635
& \cellcolor{blue!10}\textcolor{Maroon}{\textbf{0.694}} \\
& MAS\textcolor{teal}{$_{\text{ins}}\uparrow$}
& 0.348
& 0.492
& 0.459
& 0.486
& \underline{\textcolor{NavyBlue}{0.554}}
& 0.477
& 0.520
& \cellcolor{blue!10}\textcolor{Maroon}{\textbf{0.588}} \\
& RISE\textcolor{BurntOrange}{$_{\text{del}}\downarrow$}
& 0.478
& 0.381
& 0.411
& 0.388
& 0.387
& 0.384
& \underline{\textcolor{NavyBlue}{0.372}}
& \cellcolor{blue!10}\textcolor{Maroon}{\textbf{0.327}} \\
& MAS\textcolor{BurntOrange}{$_{\text{del}}\downarrow$}
& 0.650
& 0.533
& 0.547
& 0.524
& 0.510
& 0.532
& \underline{\textcolor{NavyBlue}{0.499}}
& \cellcolor{blue!10}\textcolor{Maroon}{\textbf{0.461}} \\
\cmidrule(lr){1-10}

\multirow{4}{*}{MathVerse}
& RISE\textcolor{teal}{$_{\text{ins}}\uparrow$}
& 0.405
& 0.653
& 0.623
& 0.659
& \underline{\textcolor{NavyBlue}{0.680}}
& 0.616
& 0.672
& \cellcolor{blue!10}\textcolor{Maroon}{\textbf{0.709}} \\
& MAS\textcolor{teal}{$_{\text{ins}}\uparrow$}
& 0.245
& 0.512
& 0.487
& 0.540
& \underline{\textcolor{NavyBlue}{0.581}}
& 0.478
& 0.561
& \cellcolor{blue!10}\textcolor{Maroon}{\textbf{0.601}} \\
& RISE\textcolor{BurntOrange}{$_{\text{del}}\downarrow$}
& 0.310
& 0.181
& 0.193
& 0.173
& 0.198
& 0.173
& \underline{\textcolor{NavyBlue}{0.165}}
& \cellcolor{blue!10}\textcolor{Maroon}{\textbf{0.139}} \\
& MAS\textcolor{BurntOrange}{$_{\text{del}}\downarrow$}
& 0.443
& 0.265
& 0.293
& 0.266
& 0.306
& \underline{\textcolor{NavyBlue}{0.261}}
& 0.266
& \cellcolor{blue!10}\textcolor{Maroon}{\textbf{0.239}} \\
\cmidrule(lr){1-10}

\multirow{4}{*}{VisualPuzzles}
& RISE\textcolor{teal}{$_{\text{ins}}\uparrow$}
& 0.464
& 0.562
& 0.511
& 0.512
& \underline{\textcolor{NavyBlue}{0.599}}
& 0.544
& 0.534
& \cellcolor{blue!10}\textcolor{Maroon}{\textbf{0.641}} \\
& MAS\textcolor{teal}{$_{\text{ins}}\uparrow$}
& 0.260
& 0.368
& 0.328
& 0.322
& \underline{\textcolor{NavyBlue}{0.449}}
& 0.356
& 0.355
& \cellcolor{blue!10}\textcolor{Maroon}{\textbf{0.492}} \\
& RISE\textcolor{BurntOrange}{$_{\text{del}}\downarrow$}
& 0.401
& 0.322
& 0.370
& 0.360
& 0.327
& \underline{\textcolor{NavyBlue}{0.321}}
& 0.343
& \cellcolor{blue!10}\textcolor{Maroon}{\textbf{0.256}} \\
& MAS\textcolor{BurntOrange}{$_{\text{del}}\downarrow$}
& 0.588
& 0.488
& 0.542
& 0.547
& \underline{\textcolor{NavyBlue}{0.450}}
& 0.478
& 0.517
& \cellcolor{blue!10}\textcolor{Maroon}{\textbf{0.372}} \\
\midrule
\multicolumn{10}{l}{\textit{Joint variant}}\\
\midrule

\multirow{4}{*}{MMStar}
& RISE\textcolor{teal}{$_{\text{ins}}\uparrow$}
& 0.469
& \textcolor{Maroon}{\textbf{0.714}}
& 0.677
& 0.682
& 0.686
& 0.694
& 0.706
& \cellcolor{blue!10}\underline{\textcolor{NavyBlue}{0.707}} \\
& MAS\textcolor{teal}{$_{\text{ins}}\uparrow$}
& 0.254
& 0.548
& 0.505
& 0.516
& \underline{\textcolor{NavyBlue}{0.580}}
& 0.518
& 0.541
& \cellcolor{blue!10}\textcolor{Maroon}{\textbf{0.585}} \\
& RISE\textcolor{BurntOrange}{$_{\text{del}}\downarrow$}
& 0.372
& \underline{\textcolor{NavyBlue}{0.191}}
& 0.204
& 0.199
& 0.241
& 0.192
& 0.194
& \cellcolor{blue!10}\textcolor{Maroon}{\textbf{0.189}} \\
& MAS\textcolor{BurntOrange}{$_{\text{del}}\downarrow$}
& 0.587
& 0.321
& 0.339
& 0.333
& 0.316
& \underline{\textcolor{NavyBlue}{0.308}}
& 0.327
& \cellcolor{blue!10}\textcolor{Maroon}{\textbf{0.291}} \\
\cmidrule(lr){1-10}

\multirow{4}{*}{MathVerse}
& RISE\textcolor{teal}{$_{\text{ins}}\uparrow$}
& 0.481
& \underline{\textcolor{NavyBlue}{0.765}}
& 0.741
& 0.749
& 0.728
& 0.763
& 0.759
& \cellcolor{blue!10}\textcolor{Maroon}{\textbf{0.799}} \\
& MAS\textcolor{teal}{$_{\text{ins}}\uparrow$}
& 0.252
& 0.626
& 0.603
& 0.618
& \underline{\textcolor{NavyBlue}{0.642}}
& 0.627
& 0.610
& \cellcolor{blue!10}\textcolor{Maroon}{\textbf{0.722}} \\
& RISE\textcolor{BurntOrange}{$_{\text{del}}\downarrow$}
& 0.358
& 0.200
& 0.199
& 0.195
& 0.250
& 0.175
& \underline{\textcolor{NavyBlue}{0.172}}
& \cellcolor{blue!10}\textcolor{Maroon}{\textbf{0.153}} \\
& MAS\textcolor{BurntOrange}{$_{\text{del}}\downarrow$}
& 0.581
& 0.311
& 0.305
& 0.298
& 0.321
& \underline{\textcolor{NavyBlue}{0.263}}
& 0.271
& \cellcolor{blue!10}\textcolor{Maroon}{\textbf{0.209}} \\
\cmidrule(lr){1-10}

\multirow{4}{*}{VisualPuzzles}
& RISE\textcolor{teal}{$_{\text{ins}}\uparrow$}
& 0.433
& \underline{\textcolor{NavyBlue}{0.668}}
& 0.605
& 0.603
& \underline{\textcolor{NavyBlue}{0.668}}
& 0.613
& 0.631
& \cellcolor{blue!10}\textcolor{Maroon}{\textbf{0.690}} \\
& MAS\textcolor{teal}{$_{\text{ins}}\uparrow$}
& 0.217
& 0.477
& 0.409
& 0.413
& \underline{\textcolor{NavyBlue}{0.550}}
& 0.417
& 0.434
& \cellcolor{blue!10}\textcolor{Maroon}{\textbf{0.557}} \\
& RISE\textcolor{BurntOrange}{$_{\text{del}}\downarrow$}
& 0.350
& 0.221
& 0.254
& 0.249
& 0.242
& \underline{\textcolor{NavyBlue}{0.219}}
& 0.237
& \cellcolor{blue!10}\textcolor{Maroon}{\textbf{0.193}} \\
& MAS\textcolor{BurntOrange}{$_{\text{del}}\downarrow$}
& 0.556
& 0.351
& 0.409
& 0.403
& \underline{\textcolor{NavyBlue}{0.310}}
& 0.332
& 0.385
& \cellcolor{blue!10}\textcolor{Maroon}{\textbf{0.256}} \\

\bottomrule
\end{tabular}
}
\end{table}

\begin{table}[t]
\centering
\caption{
Attribution faithfulness on \textbf{InternVL3.5-8B}.
}
\label{tab:appendix_generalizability_internvl8b}
\resizebox{\linewidth}{!}{
\renewcommand{\arraystretch}{1.15}
\begin{tabular}{llcccccccc}
\toprule
Dataset & Metric
& \textbf{ReAGent}
& \textbf{HETA}
& \textbf{FlowTracer}
& \textbf{IFR}
& \textbf{Attn Rollout}
& \textbf{AttnLRP}
& \textbf{FlashTrace}
& \cellcolor{blue!10}\textbf{\textsc{VTrace}}\\
\midrule
\multicolumn{10}{l}{\textit{Image variant}}\\
\midrule

\multirow{4}{*}{MMStar}
& RISE\textcolor{teal}{$_{\text{ins}}\uparrow$}
& 0.529
& 0.684
& 0.637
& 0.679
& 0.681
& 0.646
& \underline{\textcolor{NavyBlue}{0.694}}
& \cellcolor{blue!10}\textcolor{Maroon}{\textbf{0.726}} \\
& MAS\textcolor{teal}{$_{\text{ins}}\uparrow$}
& 0.364
& 0.570
& 0.506
& 0.572
& 0.580
& 0.517
& \underline{\textcolor{NavyBlue}{0.592}}
& \cellcolor{blue!10}\textcolor{Maroon}{\textbf{0.633}} \\
& RISE\textcolor{BurntOrange}{$_{\text{del}}\downarrow$}
& 0.479
& 0.344
& 0.365
& 0.338
& 0.370
& 0.367
& \underline{\textcolor{NavyBlue}{0.325}}
& \cellcolor{blue!10}\textcolor{Maroon}{\textbf{0.303}} \\
& MAS\textcolor{BurntOrange}{$_{\text{del}}\downarrow$}
& 0.643
& 0.475
& 0.505
& 0.462
& 0.491
& 0.509
& \underline{\textcolor{NavyBlue}{0.447}}
& \cellcolor{blue!10}\textcolor{Maroon}{\textbf{0.426}} \\
\cmidrule(lr){1-10}

\multirow{4}{*}{MathVerse}
& RISE\textcolor{teal}{$_{\text{ins}}\uparrow$}
& 0.421
& 0.718
& 0.695
& 0.732
& 0.733
& 0.662
& \underline{\textcolor{NavyBlue}{0.742}}
& \cellcolor{blue!10}\textcolor{Maroon}{\textbf{0.753}} \\
& MAS\textcolor{teal}{$_{\text{ins}}\uparrow$}
& 0.298
& 0.620
& 0.579
& 0.632
& 0.624
& 0.545
& \underline{\textcolor{NavyBlue}{0.648}}
& \cellcolor{blue!10}\textcolor{Maroon}{\textbf{0.661}} \\
& RISE\textcolor{BurntOrange}{$_{\text{del}}\downarrow$}
& 0.322
& 0.170
& 0.174
& 0.157
& 0.174
& 0.189
& \underline{\textcolor{NavyBlue}{0.154}}
& \cellcolor{blue!10}\textcolor{Maroon}{\textbf{0.144}} \\
& MAS\textcolor{BurntOrange}{$_{\text{del}}\downarrow$}
& 0.456
& 0.260
& {\textcolor{Maroon}{\textbf{0.248}}}
& 0.284
& 0.331
& 0.269
& 0.270
& \cellcolor{blue!10}\underline{\textcolor{NavyBlue}{0.251}} \\
\cmidrule(lr){1-10}

\multirow{4}{*}{VisualPuzzles}
& RISE\textcolor{teal}{$_{\text{ins}}\uparrow$}
& 0.415
& 0.565
& 0.494
& 0.543
& {\textcolor{Maroon}{\textbf{0.622}}}
& 0.540
& 0.550
& \cellcolor{blue!10}\underline{\textcolor{NavyBlue}{0.617}} \\
& MAS\textcolor{teal}{$_{\text{ins}}\uparrow$}
& 0.256
& 0.414
& 0.330
& 0.399
& {\textcolor{Maroon}{\textbf{0.508}}}
& 0.372
& 0.408
& \cellcolor{blue!10}\underline{\textcolor{NavyBlue}{0.487}} \\
& RISE\textcolor{BurntOrange}{$_{\text{del}}\downarrow$}
& 0.379
& 0.278
& 0.345
& 0.304
& \underline{\textcolor{NavyBlue}{0.270}}
& 0.278
& 0.296
& \cellcolor{blue!10}\textcolor{Maroon}{\textbf{0.239}} \\
& MAS\textcolor{BurntOrange}{$_{\text{del}}\downarrow$}
& 0.529
& 0.404
& 0.512
& 0.455
& \underline{\textcolor{NavyBlue}{0.378}}
& 0.404
& 0.438
& \cellcolor{blue!10}\textcolor{Maroon}{\textbf{0.353}} \\
\midrule
\multicolumn{10}{l}{\textit{Joint variant}}\\
\midrule

\multirow{4}{*}{MMStar}
& RISE\textcolor{teal}{$_{\text{ins}}\uparrow$}
& 0.400
& \underline{\textcolor{NavyBlue}{0.727}}
& 0.700
& 0.715
& 0.537
& 0.693
& 0.708
& \cellcolor{blue!10}\textcolor{Maroon}{\textbf{0.760}} \\
& MAS\textcolor{teal}{$_{\text{ins}}\uparrow$}
& 0.184
& \underline{\textcolor{NavyBlue}{0.583}}
& 0.548
& 0.578
& 0.361
& 0.532
& 0.543
& \cellcolor{blue!10}\textcolor{Maroon}{\textbf{0.657}} \\
& RISE\textcolor{BurntOrange}{$_{\text{del}}\downarrow$}
& 0.286
& 0.135
& 0.130
& 0.123
& 0.245
& 0.132
& {\textcolor{Maroon}{\textbf{0.101}}}
& \cellcolor{blue!10}\underline{\textcolor{NavyBlue}{0.103}} \\
& MAS\textcolor{BurntOrange}{$_{\text{del}}\downarrow$}
& 0.457
& 0.212
& 0.204
& 0.193
& 0.341
& 0.194
& \underline{\textcolor{NavyBlue}{0.157}}
& \cellcolor{blue!10}\textcolor{Maroon}{\textbf{0.149}} \\
\cmidrule(lr){1-10}

\multirow{4}{*}{MathVerse}
& RISE\textcolor{teal}{$_{\text{ins}}\uparrow$}
& 0.457
& 0.786
& 0.775
& 0.780
& 0.673
& 0.760
& \underline{\textcolor{NavyBlue}{0.794}}
& \cellcolor{blue!10}\textcolor{Maroon}{\textbf{0.854}} \\
& MAS\textcolor{teal}{$_{\text{ins}}\uparrow$}
& 0.272
& 0.675
& 0.675
& \underline{\textcolor{NavyBlue}{0.685}}
& 0.540
& 0.639
& 0.681
& \cellcolor{blue!10}\textcolor{Maroon}{\textbf{0.800}} \\
& RISE\textcolor{BurntOrange}{$_{\text{del}}\downarrow$}
& 0.354
& 0.190
& 0.181
& 0.173
& 0.331
& 0.182
& \underline{\textcolor{NavyBlue}{0.147}}
& \cellcolor{blue!10}\textcolor{Maroon}{\textbf{0.116}} \\
& MAS\textcolor{BurntOrange}{$_{\text{del}}\downarrow$}
& 0.513
& 0.278
& 0.266
& 0.254
& 0.456
& 0.266
& \underline{\textcolor{NavyBlue}{0.217}}
& \cellcolor{blue!10}\textcolor{Maroon}{\textbf{0.168}} \\
\cmidrule(lr){1-10}

\multirow{4}{*}{VisualPuzzles}
& RISE\textcolor{teal}{$_{\text{ins}}\uparrow$}
& 0.376
& \underline{\textcolor{NavyBlue}{0.648}}
& 0.578
& 0.603
& 0.586
& 0.588
& 0.576
& \cellcolor{blue!10}\textcolor{Maroon}{\textbf{0.666}} \\
& MAS\textcolor{teal}{$_{\text{ins}}\uparrow$}
& 0.181
& \underline{\textcolor{NavyBlue}{0.470}}
& 0.407
& 0.447
& 0.420
& 0.401
& 0.359
& \cellcolor{blue!10}\textcolor{Maroon}{\textbf{0.531}} \\
& RISE\textcolor{BurntOrange}{$_{\text{del}}\downarrow$}
& 0.304
& 0.189
& 0.223
& 0.208
& 0.207
& 0.184
& \underline{\textcolor{NavyBlue}{0.181}}
& \cellcolor{blue!10}\textcolor{Maroon}{\textbf{0.135}} \\
& MAS\textcolor{BurntOrange}{$_{\text{del}}\downarrow$}
& 0.452
& 0.295
& 0.362
& 0.337
& 0.280
& \underline{\textcolor{NavyBlue}{0.274}}
& 0.299
& \cellcolor{blue!10}\textcolor{Maroon}{\textbf{0.188}} \\

\bottomrule
\end{tabular}
}
\end{table}

\begin{table*}[t]
\centering
\caption{
Attribution faithfulness on correctly and incorrectly
answered samples for the \textbf{Joint variant}.
RISE and MAS are reported under insertion
(\textcolor{teal}{$_{\text{ins}}\uparrow$}, higher is better)
and deletion
(\textcolor{BurntOrange}{$_{\text{del}}\downarrow$}, lower is better).
Best results are highlighted in
\textcolor{Maroon}{\textbf{red bold}}, and second-best results are highlighted
in \textcolor{NavyBlue}{\underline{blue underlining}}.
}
\label{tab:correctness_joint_full}
\resizebox{\linewidth}{!}{
\renewcommand{\arraystretch}{1.12}
\begin{tabular}{lllcccccccc}
\toprule
Dataset & Split & Metric
& \textbf{ReAGent}
& \textbf{HETA}
& \textbf{FlowTracer}
& \textbf{IFR}
& \textbf{Attn Rollout}
& \textbf{AttnLRP}
& \textbf{FlashTrace}
& \cellcolor{blue!10}\textbf{\textsc{VTrace}} \\
\midrule

\multirow{8}{*}{\shortstack{MMStar}}
& \multirow{4}{*}{Correct}
& RISE\textcolor{teal}{$_{\text{ins}}\uparrow$}
& 0.371 & 0.482 & 0.504 & 0.504 & 0.497 & 0.521
& \underline{\textcolor{NavyBlue}{0.552}}
& \cellcolor{blue!10}\textcolor{Maroon}{\textbf{0.581}} \\

&
& MAS\textcolor{teal}{$_{\text{ins}}\uparrow$}
& 0.166 & 0.271 & 0.297 & 0.303 & 0.310 & 0.306
& \underline{\textcolor{NavyBlue}{0.347}}
& \cellcolor{blue!10}\textcolor{Maroon}{\textbf{0.400}} \\

&
& RISE\textcolor{BurntOrange}{$_{\text{del}}\downarrow$}
& 0.326 & 0.251 & 0.230 & 0.225 & 0.278 & 0.225
& \underline{\textcolor{NavyBlue}{0.207}}
& \cellcolor{blue!10}\textcolor{Maroon}{\textbf{0.181}} \\

&
& MAS\textcolor{BurntOrange}{$_{\text{del}}\downarrow$}
& 0.494 & 0.417 & 0.377 & 0.369 & 0.390 & 0.351
& \underline{\textcolor{NavyBlue}{0.336}}
& \cellcolor{blue!10}\textcolor{Maroon}{\textbf{0.248}} \\
\cmidrule(lr){2-11}

& \multirow{4}{*}{Incorrect}
& RISE\textcolor{teal}{$_{\text{ins}}\uparrow$}
& 0.371 & 0.511 & 0.514 & 0.514 & 0.513 & 0.551
& \underline{\textcolor{NavyBlue}{0.559}}
& \cellcolor{blue!10}\textcolor{Maroon}{\textbf{0.580}} \\

&
& MAS\textcolor{teal}{$_{\text{ins}}\uparrow$}
& 0.164 & 0.295 & 0.305 & 0.311 & 0.324 & 0.337
& \underline{\textcolor{NavyBlue}{0.352}}
& \cellcolor{blue!10}\textcolor{Maroon}{\textbf{0.389}} \\

&
& RISE\textcolor{BurntOrange}{$_{\text{del}}\downarrow$}
& 0.322 & 0.227 & 0.216 & 0.212 & 0.257 & 0.202
& \underline{\textcolor{NavyBlue}{0.195}}
& \cellcolor{blue!10}\textcolor{Maroon}{\textbf{0.177}} \\

&
& MAS\textcolor{BurntOrange}{$_{\text{del}}\downarrow$}
& 0.492 & 0.381 & 0.358 & 0.350 & 0.365
& \underline{\textcolor{NavyBlue}{0.315}}
& 0.320
& \cellcolor{blue!10}\textcolor{Maroon}{\textbf{0.241}} \\
\midrule

\multirow{8}{*}{\shortstack{MathVista}}
& \multirow{4}{*}{Correct}
& RISE\textcolor{teal}{$_{\text{ins}}\uparrow$}
& 0.376 & 0.460 & 0.510 & 0.507 & 0.492 & 0.501
& \underline{\textcolor{NavyBlue}{0.567}}
& \cellcolor{blue!10}\textcolor{Maroon}{\textbf{0.644}} \\

&
& MAS\textcolor{teal}{$_{\text{ins}}\uparrow$}
& 0.177 & 0.262 & 0.309 & 0.314 & 0.317 & 0.292
& \underline{\textcolor{NavyBlue}{0.370}}
& \cellcolor{blue!10}\textcolor{Maroon}{\textbf{0.495}} \\

&
& RISE\textcolor{BurntOrange}{$_{\text{del}}\downarrow$}
& 0.341 & 0.308 & 0.276 & 0.272 & 0.336 & 0.276
& \underline{\textcolor{NavyBlue}{0.242}}
& \cellcolor{blue!10}\textcolor{Maroon}{\textbf{0.174}} \\

&
& MAS\textcolor{BurntOrange}{$_{\text{del}}\downarrow$}
& 0.522 & 0.496 & 0.438 & 0.430 & 0.470 & 0.424
& \underline{\textcolor{NavyBlue}{0.381}}
& \cellcolor{blue!10}\textcolor{Maroon}{\textbf{0.241}} \\
\cmidrule(lr){2-11}

& \multirow{4}{*}{Incorrect}
& RISE\textcolor{teal}{$_{\text{ins}}\uparrow$}
& 0.410 & 0.501 & 0.551 & 0.560 & 0.542 & 0.565
& \underline{\textcolor{NavyBlue}{0.606}}
& \cellcolor{blue!10}\textcolor{Maroon}{\textbf{0.657}} \\

&
& MAS\textcolor{teal}{$_{\text{ins}}\uparrow$}
& 0.228 & 0.299 & 0.347 & 0.373 & 0.376 & 0.361
& \underline{\textcolor{NavyBlue}{0.410}}
& \cellcolor{blue!10}\textcolor{Maroon}{\textbf{0.502}} \\

&
& RISE\textcolor{BurntOrange}{$_{\text{del}}\downarrow$}
& 0.364 & 0.299 & 0.264 & 0.257 & 0.310 & 0.265
& \underline{\textcolor{NavyBlue}{0.234}}
& \cellcolor{blue!10}\textcolor{Maroon}{\textbf{0.194}} \\

&
& MAS\textcolor{BurntOrange}{$_{\text{del}}\downarrow$}
& 0.519 & 0.484 & 0.420 & 0.409 & 0.437 & 0.413
& \underline{\textcolor{NavyBlue}{0.370}}
& \cellcolor{blue!10}\textcolor{Maroon}{\textbf{0.269}} \\
\midrule

\multirow{8}{*}{\shortstack{MMMU}}
& \multirow{4}{*}{Correct}
& RISE\textcolor{teal}{$_{\text{ins}}\uparrow$}
& 0.368 & 0.583 & 0.576 & 0.599 & 0.593 & 0.606
& \underline{\textcolor{NavyBlue}{0.623}}
& \cellcolor{blue!10}\textcolor{Maroon}{\textbf{0.654}} \\

&
& MAS\textcolor{teal}{$_{\text{ins}}\uparrow$}
& 0.169 & 0.386 & 0.383 & 0.415
& \underline{\textcolor{NavyBlue}{0.436}}
& 0.409 & 0.427
& \cellcolor{blue!10}\textcolor{Maroon}{\textbf{0.494}} \\

&
& RISE\textcolor{BurntOrange}{$_{\text{del}}\downarrow$}
& 0.314 & 0.217 & 0.206 & 0.197 & 0.246 & 0.193
& \underline{\textcolor{NavyBlue}{0.179}}
& \cellcolor{blue!10}\textcolor{Maroon}{\textbf{0.159}} \\

&
& MAS\textcolor{BurntOrange}{$_{\text{del}}\downarrow$}
& 0.466 & 0.359 & 0.347 & 0.327 & 0.341 & 0.297
& \underline{\textcolor{NavyBlue}{0.296}}
& \cellcolor{blue!10}\textcolor{Maroon}{\textbf{0.216}} \\
\cmidrule(lr){2-11}

& \multirow{4}{*}{Incorrect}
& RISE\textcolor{teal}{$_{\text{ins}}\uparrow$}
& 0.370 & 0.613 & 0.607 & 0.637 & 0.644 & 0.645
& \underline{\textcolor{NavyBlue}{0.650}}
& \cellcolor{blue!10}\textcolor{Maroon}{\textbf{0.682}} \\

&
& MAS\textcolor{teal}{$_{\text{ins}}\uparrow$}
& 0.176 & 0.425 & 0.428 & 0.473
& \underline{\textcolor{NavyBlue}{0.500}}
& 0.464 & 0.468
& \cellcolor{blue!10}\textcolor{Maroon}{\textbf{0.532}} \\

&
& RISE\textcolor{BurntOrange}{$_{\text{del}}\downarrow$}
& 0.322 & 0.211 & 0.199 & 0.187 & 0.223 & 0.183
& \underline{\textcolor{NavyBlue}{0.171}}
& \cellcolor{blue!10}\textcolor{Maroon}{\textbf{0.156}} \\

&
& MAS\textcolor{BurntOrange}{$_{\text{del}}\downarrow$}
& 0.474 & 0.343 & 0.330 & 0.308 & 0.310 & 0.282
& \underline{\textcolor{NavyBlue}{0.279}}
& \cellcolor{blue!10}\textcolor{Maroon}{\textbf{0.212}} \\

\bottomrule
\end{tabular}
}
\end{table*}

\begin{table*}[t]
\centering
\caption{
Attribution faithfulness on correctly and incorrectly
answered samples for the \textbf{Image variant}.
RISE and MAS are reported under insertion
(\textcolor{teal}{$_{\text{ins}}\uparrow$}, higher is better)
and deletion
(\textcolor{BurntOrange}{$_{\text{del}}\downarrow$}, lower is better).
Best results are highlighted in
\textcolor{Maroon}{\textbf{red bold}}, and second-best results are highlighted
in \textcolor{NavyBlue}{\underline{blue underlining}}.
}
\label{tab:correctness_image_full}
\resizebox{\linewidth}{!}{
\renewcommand{\arraystretch}{1.12}
\begin{tabular}{lllcccccccc}
\toprule
Dataset & Split & Metric
& \textbf{ReAGent}
& \textbf{HETA}
& \textbf{FlowTracer}
& \textbf{IFR}
& \textbf{Attn Rollout}
& \textbf{AttnLRP}
& \textbf{FlashTrace}
& \cellcolor{blue!10}\textbf{\textsc{VTrace}} \\
\midrule

\multirow{8}{*}{\shortstack{MMStar}}
& \multirow{4}{*}{Correct}
& RISE\textcolor{teal}{$_{\text{ins}}\uparrow$}
& 0.499 & 0.493 & 0.534 & 0.542 & 0.541
& \underline{\textcolor{NavyBlue}{0.560}}
& 0.556
& \cellcolor{blue!10}\textcolor{Maroon}{\textbf{0.603}} \\

&
& MAS\textcolor{teal}{$_{\text{ins}}\uparrow$}
& 0.328 & 0.326 & 0.371 & 0.390 & 0.403 & 0.389
& \underline{\textcolor{NavyBlue}{0.414}}
& \cellcolor{blue!10}\textcolor{Maroon}{\textbf{0.466}} \\

&
& RISE\textcolor{BurntOrange}{$_{\text{del}}\downarrow$}
& 0.453 & 0.459 & 0.402 & 0.399 & 0.429
& \underline{\textcolor{NavyBlue}{0.380}}
& 0.386
& \cellcolor{blue!10}\textcolor{Maroon}{\textbf{0.344}} \\

&
& MAS\textcolor{BurntOrange}{$_{\text{del}}\downarrow$}
& 0.616 & 0.618 & 0.549 & 0.536 & 0.552 & 0.539
& \underline{\textcolor{NavyBlue}{0.514}}
& \cellcolor{blue!10}\textcolor{Maroon}{\textbf{0.478}} \\
\cmidrule(lr){2-11}

& \multirow{4}{*}{Incorrect}
& RISE\textcolor{teal}{$_{\text{ins}}\uparrow$}
& 0.525 & 0.509 & 0.526 & 0.531 & 0.531
& \underline{\textcolor{NavyBlue}{0.572}}
& 0.550
& \cellcolor{blue!10}\textcolor{Maroon}{\textbf{0.591}} \\

&
& MAS\textcolor{teal}{$_{\text{ins}}\uparrow$}
& 0.357 & 0.336 & 0.360 & 0.369 & 0.388
& \underline{\textcolor{NavyBlue}{0.399}}
& \underline{\textcolor{NavyBlue}{0.399}}
& \cellcolor{blue!10}\textcolor{Maroon}{\textbf{0.442}} \\

&
& RISE\textcolor{BurntOrange}{$_{\text{del}}\downarrow$}
& 0.471 & 0.477 & 0.440 & 0.437 & 0.469
& \underline{\textcolor{NavyBlue}{0.397}}
& 0.418
& \cellcolor{blue!10}\textcolor{Maroon}{\textbf{0.374}} \\

&
& MAS\textcolor{BurntOrange}{$_{\text{del}}\downarrow$}
& 0.632 & 0.646 & 0.603 & 0.595 & 0.606 & 0.568
& \underline{\textcolor{NavyBlue}{0.566}}
& \cellcolor{blue!10}\textcolor{Maroon}{\textbf{0.520}} \\
\midrule

\multirow{8}{*}{\shortstack{MathVista}}
& \multirow{4}{*}{Correct}
& RISE\textcolor{teal}{$_{\text{ins}}\uparrow$}
& 0.496 & 0.509 & 0.576 & 0.586 & 0.580
& \underline{\textcolor{NavyBlue}{0.597}}
& \underline{\textcolor{NavyBlue}{0.597}}
& \cellcolor{blue!10}\textcolor{Maroon}{\textbf{0.663}} \\

&
& MAS\textcolor{teal}{$_{\text{ins}}\uparrow$}
& 0.326 & 0.335 & 0.418 & 0.443 & 0.446 & 0.424
& \underline{\textcolor{NavyBlue}{0.459}}
& \cellcolor{blue!10}\textcolor{Maroon}{\textbf{0.537}} \\

&
& RISE\textcolor{BurntOrange}{$_{\text{del}}\downarrow$}
& 0.440 & 0.439 & 0.358 & 0.356 & 0.388 & 0.348
& \underline{\textcolor{NavyBlue}{0.345}}
& \cellcolor{blue!10}\textcolor{Maroon}{\textbf{0.297}} \\

&
& MAS\textcolor{BurntOrange}{$_{\text{del}}\downarrow$}
& 0.611 & 0.599 & 0.500 & 0.489 & 0.510 & 0.506
& \underline{\textcolor{NavyBlue}{0.470}}
& \cellcolor{blue!10}\textcolor{Maroon}{\textbf{0.429}} \\
\cmidrule(lr){2-11}

& \multirow{4}{*}{Incorrect}
& RISE\textcolor{teal}{$_{\text{ins}}\uparrow$}
& 0.523 & 0.539 & 0.590 & 0.614 & 0.603 & 0.611
& \underline{\textcolor{NavyBlue}{0.617}}
& \cellcolor{blue!10}\textcolor{Maroon}{\textbf{0.655}} \\

&
& MAS\textcolor{teal}{$_{\text{ins}}\uparrow$}
& 0.373 & 0.364 & 0.434 & 0.480 & 0.478 & 0.442
& \underline{\textcolor{NavyBlue}{0.485}}
& \cellcolor{blue!10}\textcolor{Maroon}{\textbf{0.523}} \\

&
& RISE\textcolor{BurntOrange}{$_{\text{del}}\downarrow$}
& 0.483 & 0.469 & 0.416 & 0.396 & 0.422 & 0.405
& \underline{\textcolor{NavyBlue}{0.394}}
& \cellcolor{blue!10}\textcolor{Maroon}{\textbf{0.373}} \\

&
& MAS\textcolor{BurntOrange}{$_{\text{del}}\downarrow$}
& 0.634 & 0.644 & 0.575 & 0.546 & 0.558 & 0.588
& \underline{\textcolor{NavyBlue}{0.538}}
& \cellcolor{blue!10}\textcolor{Maroon}{\textbf{0.525}} \\
\midrule

\multirow{8}{*}{\shortstack{MMMU}}
& \multirow{4}{*}{Correct}
& RISE\textcolor{teal}{$_{\text{ins}}\uparrow$}
& 0.533 & 0.562 & 0.591 & 0.606
& \underline{\textcolor{NavyBlue}{0.624}}
& 0.607 & 0.616
& \cellcolor{blue!10}\textcolor{Maroon}{\textbf{0.666}} \\

&
& MAS\textcolor{teal}{$_{\text{ins}}\uparrow$}
& 0.372 & 0.389 & 0.430 & 0.458
& \underline{\textcolor{NavyBlue}{0.496}}
& 0.440 & 0.479
& \cellcolor{blue!10}\textcolor{Maroon}{\textbf{0.540}} \\

&
& RISE\textcolor{BurntOrange}{$_{\text{del}}\downarrow$}
& 0.470 & 0.463 & 0.421 & 0.409 & 0.428 & 0.403
& \underline{\textcolor{NavyBlue}{0.400}}
& \cellcolor{blue!10}\textcolor{Maroon}{\textbf{0.359}} \\

&
& MAS\textcolor{BurntOrange}{$_{\text{del}}\downarrow$}
& 0.626 & 0.623 & 0.585 & 0.555 & 0.560 & 0.566
& \underline{\textcolor{NavyBlue}{0.539}}
& \cellcolor{blue!10}\textcolor{Maroon}{\textbf{0.497}} \\
\cmidrule(lr){2-11}

& \multirow{4}{*}{Incorrect}
& RISE\textcolor{teal}{$_{\text{ins}}\uparrow$}
& 0.575 & 0.567 & 0.608 & 0.625
& \underline{\textcolor{NavyBlue}{0.638}}
& 0.627 & 0.627
& \cellcolor{blue!10}\textcolor{Maroon}{\textbf{0.661}} \\

&
& MAS\textcolor{teal}{$_{\text{ins}}\uparrow$}
& 0.412 & 0.390 & 0.435 & 0.472
& \underline{\textcolor{NavyBlue}{0.513}}
& 0.456 & 0.481
& \cellcolor{blue!10}\textcolor{Maroon}{\textbf{0.523}} \\

&
& RISE\textcolor{BurntOrange}{$_{\text{del}}\downarrow$}
& 0.522 & 0.541 & 0.493 & 0.477 & 0.497
& \underline{\textcolor{NavyBlue}{0.470}}
& 0.472
& \cellcolor{blue!10}\textcolor{Maroon}{\textbf{0.448}} \\

&
& MAS\textcolor{BurntOrange}{$_{\text{del}}\downarrow$}
& 0.683 & 0.717 & 0.676 & 0.647 & 0.647 & 0.637
& \underline{\textcolor{NavyBlue}{0.634}}
& \cellcolor{blue!10}\textcolor{Maroon}{\textbf{0.606}} \\

\bottomrule
\end{tabular}
}
\end{table*}

\subsection{Full Results on Robustness to Prediction Correctness}
\label{appendix:correctness_full_results}
We evaluate attribution faithfulness separately on correctly and incorrectly answered samples to examine whether the attribution quality depends on prediction correctness.
\Cref{tab:correctness_joint_full,tab:correctness_image_full} show the complete results for the Image and Joint Variants, where \textsc{VTRACE} achieves the best performance on both correct and incorrect predictions.
The results indicate that the attribution gains arise from attribution method rather than answer correctness.

\begin{figure*}[h]
    \centering
    \includegraphics[width=0.9\linewidth]{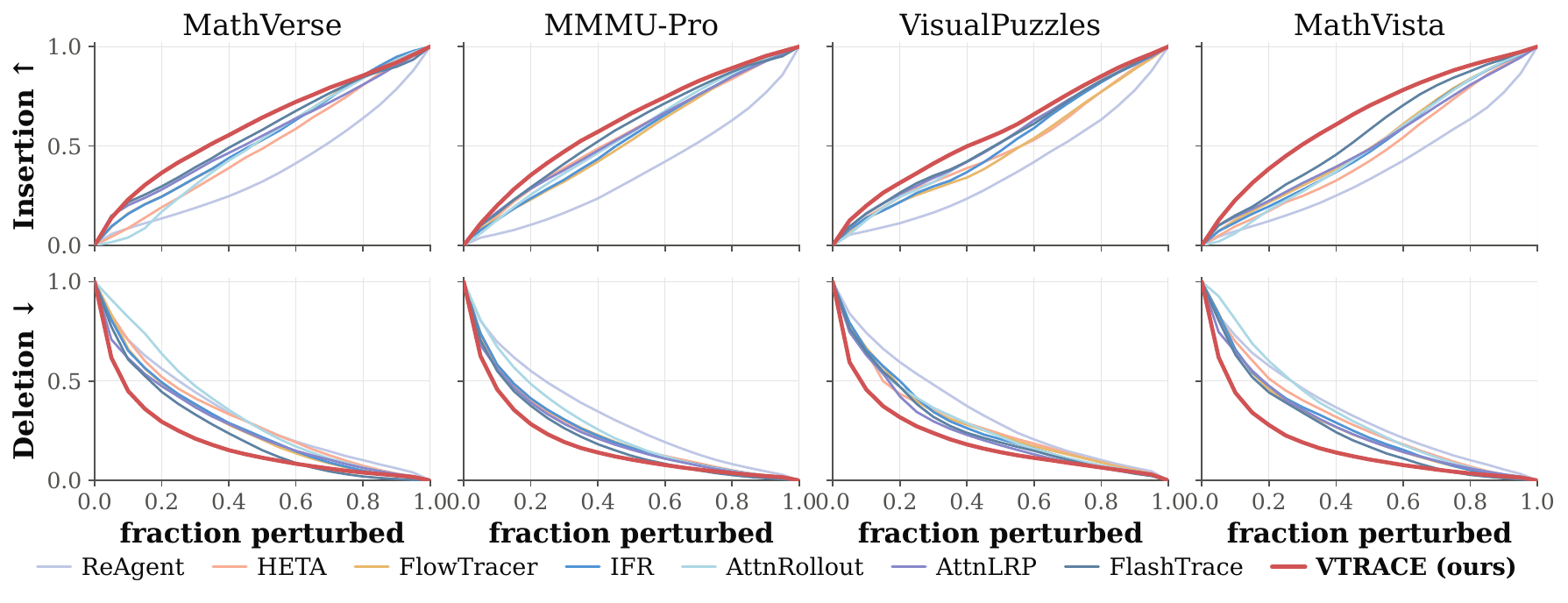}
    \caption{Fine-grained insertion and deletion perturbation curves across benchmarks.}
    \label{fig:fine_grained_curves} 
    \vspace{-0.3cm}
\end{figure*}

\subsection{Fine-Grained Perturbation Curves}
\label{appendix:fine_grained_plot}
\textsc{VTrace} produces more faithful rankings throughout the perturbation process, beyond what is captured by the AUC scores. Shown in Figure~\Cref{fig:fine_grained_curves}, \textsc{VTrace} achieves faster recovery under insertion and sharper degradation under deletion. The advantage is especially clear at early perturbation stages, where restoring only a small fraction of top-ranked tokens rapidly recovers the response, while removing them causes substantial degradation.

\subsection{Summing paths carries credit back to the inputs.}
\label{appendix:hops_ablation_performance}
Figure~\ref{fig:credit_distance} shows how far back a generated token's credit comes from. Evidently, a single hop keeps credit near the receiver: under $W$, 22\% of a token's credit falls on the token directly before it and 58\% on the 20 tokens before it (IFR: 17\% and 53\%). Once \textsc{VTrace} sums all direct and indirect paths ($R$), the token directly before keeps 5\%, and 75\% of the credit comes from sources more than 20 positions back. For a generated token, those sources are mostly the question, the image and the early reasoning. Figure~\ref{fig:image_credit_hops} tracks how much of the answer's credit lands on the image as longer paths are added. The direct edge gives the image 0.7\% and IFR gives it 1.0\%. The share rises with every hop, to 1.6\% with paths of up to two edges and 6.2\% with paths of up to eight, and $\R$ reaches 8.5\%. The direct edge therefore sees almost none of the image, and the image share is still rising after eight hops. Figure~\ref{fig:hops_performance} tests whether this extra credit is faithful. We rank the image patches and the question tokens separately by each operator's score, then remove them in that order (deletion) or add them back to a fully masked input (insertion) and measure the RISE. Notably, longer paths help in all four settings, and \textsc{VTrace}, which aggregates all paths performs the best, as expected.
\begin{figure*}[h]
    \centering
    \begin{subfigure}[b]{0.324\linewidth}
        \includegraphics[width=\linewidth]{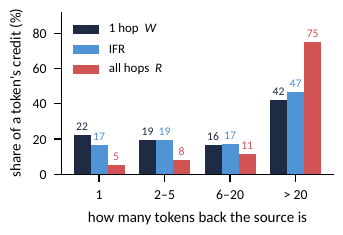}
        \caption{Source distance}
        \label{fig:credit_distance}
    \end{subfigure}\hfill
    \begin{subfigure}[b]{0.333\linewidth}
        \includegraphics[width=\linewidth]{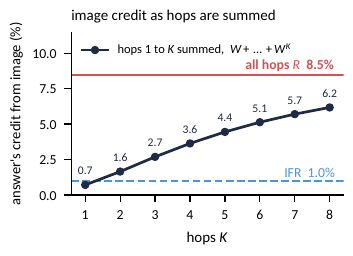}
        \caption{Image credit vs.\ hops}
        \label{fig:image_credit_hops}
    \end{subfigure}\hfill
    \begin{subfigure}[b]{0.313\linewidth}
        \includegraphics[width=\linewidth]{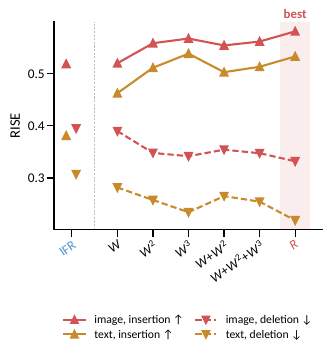}
        \caption{RISE by operator}
        \label{fig:hops_performance}
    \end{subfigure}
    \caption{(a) How far back a generated token's credit comes from. (b) The answer's image credit as hops are summed. (c) Performance comparison between diverse hops, \textsc{VTrace} ($R$) and IFR.}
    \label{fig:hops}
    \vspace{-0.3cm}
\end{figure*}

\section{Case Studies} \label{appendix:case_studies}
\textbf{Image token attribution.} Figure~\ref{fig:image_attribution_casestudy} compares the top 20\% image patches selected by each method against a perturbation-based reference. For each patch, we blur it and measure the resulting drop in the model's trace likelihood, where a larger drop indicates greater reliance on that region. This reference map is shown in the second column. The reported score on the bottom right measures how much of the reference importance is captured by the selected patches, with higher values indicating better alignment. \textbf{We demonstrate two better performing and two slightly behind cases of \textsc{VTrace}.} In the first two examples, \textsc{VTrace} focuses more strongly on the relevant people and scene regions, scoring 0.50 and 0.42 compared with 0.46 and 0.35 for the strongest baseline. The third example is tied with FlowTracer. The fourth shows a slightly less aligned case, while both \textsc{VTrace} and AttnLRP both captures the important balls, \textsc{VTrace} also assigns attribution to the player and table edge. This suggests that \textsc{VTrace} can miss important visual regions when they are not clearly reflected in the generated reasoning trace.

\begin{figure}[h!]
    \centering
    \includegraphics[width=1\linewidth]{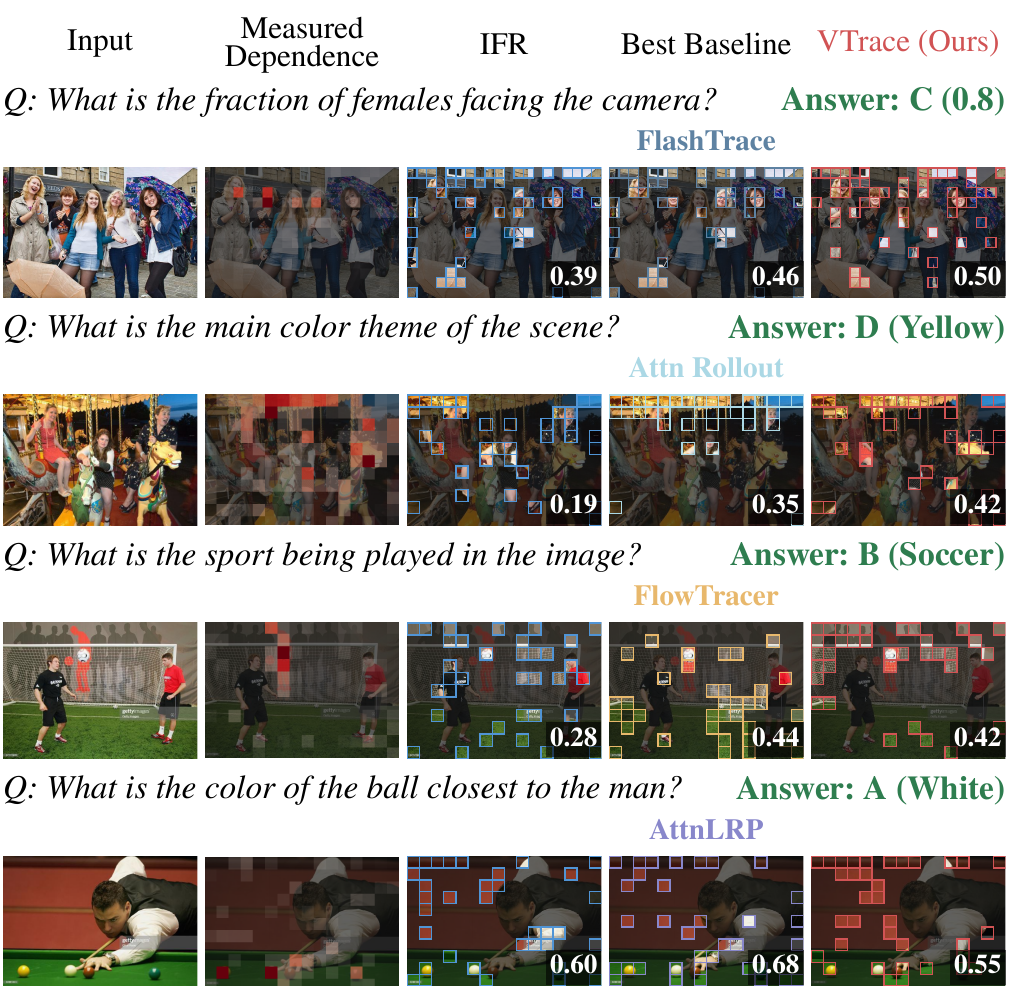}
    \caption{\textbf{Image token attribution on four MMStar questions.} Each method's top 20\% of patches is outlined, with the share of the measured dependence it captures in the corner (higher is better). The measured dependence is the likelihood drop from blurring each patch in turn. \textsc{VTRACE} leads on the first two questions, ties FlowTracer on the third, and trails AttnLRP on the fourth.}
    \label{fig:image_attribution_casestudy}
\end{figure}

\textbf{Joint image and text attribution.} Figures~\ref{fig:casestudy_mmstar136} to~\ref{fig:casestudy_mmstar475} show four case studies, comparing IFR, a second baseline and \textsc{VTrace} on the same fixed model response, with the boxed final answer (without loss of generality) selected as the target. In each row, the image on the left and the question text outlines the method's top 20\% of the image tokens, and dims the rest. The text on the right shows the question and the full response, with each token shaded by its attribution score: the darker the token, the more it contributed to the answer. 

A faithful attribution should point to the evidence the answer relies on: the image regions the question is about, the key words of the question, and the reasoning steps that carry what the model saw to its answer. Evidently, \textsc{VTrace} does this more consistently than the baselines. Its top patches concentrate on the relevant objects and labels, while the baselines often spread their patches over the background and the image border. In the text, \textsc{VTrace} highlights the key question words and the reasoning steps that use visual evidence, whereas the baselines focus mostly on the answer options and the final sentence. \Cref{fig:casestudy_mmstar475} shows a harder case, where \textsc{VTrace}'s image selection is less focused. See the captions for detailed description.

\begin{figure}
    \centering
    \includegraphics[width=\linewidth]{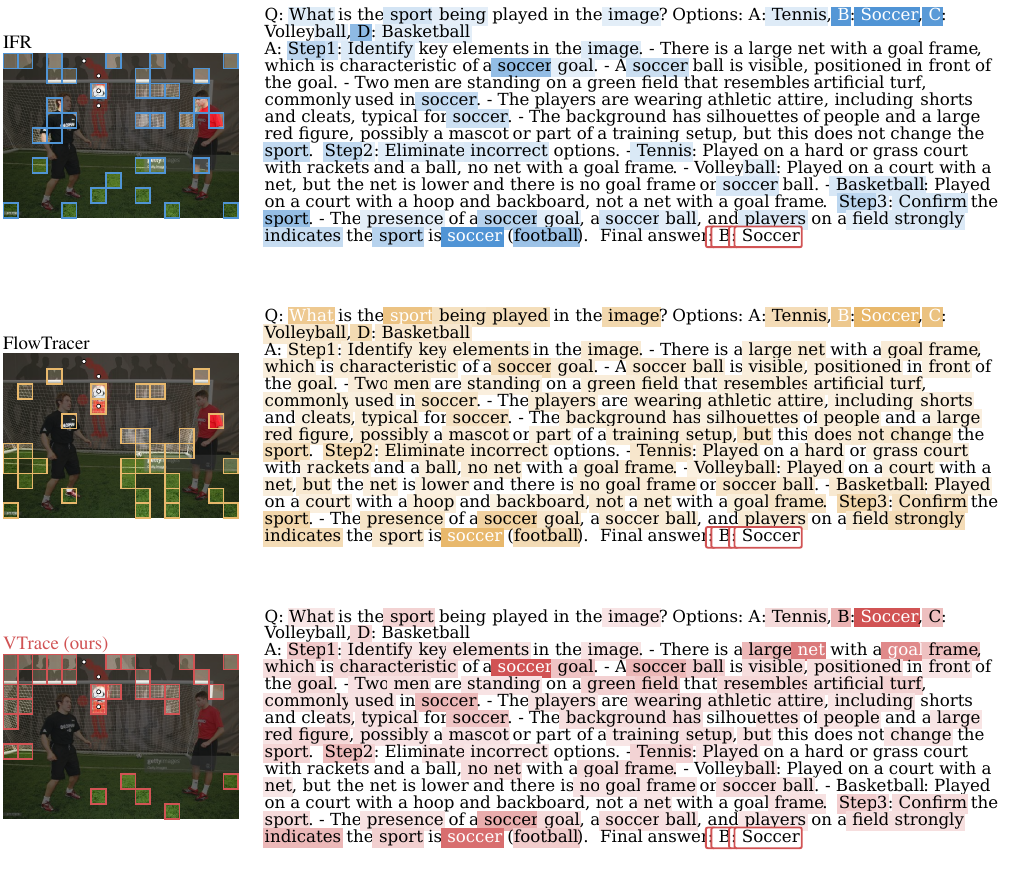}
    \vspace{-0.5cm}
    \caption{\textbf{Recognizing the sport.} The model answers ``Soccer''. \textsc{VTrace} places most of its top patches on the goal net, the object that identifies the sport, while IFR and FlowTracer spread more patches over the players and the grass. In the text, \textsc{VTrace} scores the option ``Soccer'' and the concluding ``soccer'' highest, and also shades the scene the model describes, such as ``large net'', ``goal frame'' and ``soccer ball''; FlowTracer concentrates on the question wording.}
    \label{fig:casestudy_mmstar136}
\end{figure}

\begin{figure}
    \centering
    \includegraphics[width=\linewidth]{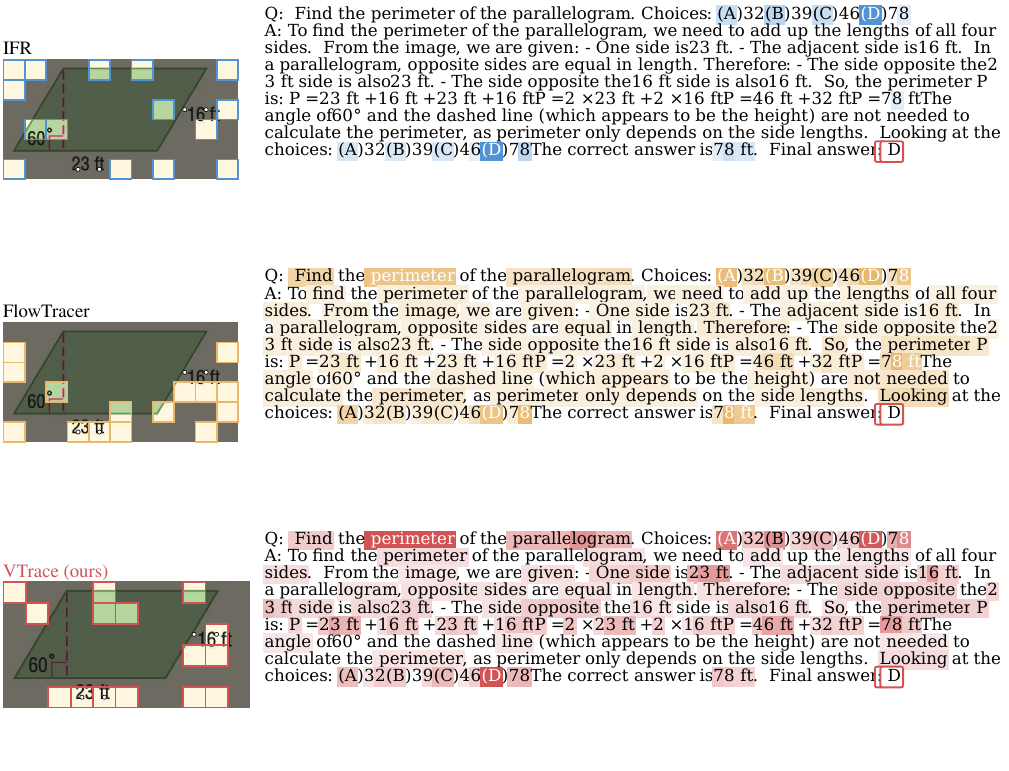}
    \vspace{-0.5cm}
    \caption{\textbf{Reading the side lengths of a parallelogram.} The model adds the side lengths shown in the image, $2\times 23 + 2\times 16 = 78$, and answers (D). \textsc{VTrace}'s top patches cover the ``23 ft'' and ``16 ft'' labels, while IFR's lie mostly along the image border. In the text, IFR highlights little beyond the answer options, whereas \textsc{VTrace} highlights ``perimeter'' in the question and the side lengths and sums in the reasoning that lead to the answer.}
    \label{fig:casestudy_mmstar1158}
\end{figure}

\begin{figure}
    \centering
    \includegraphics[width=1\linewidth]{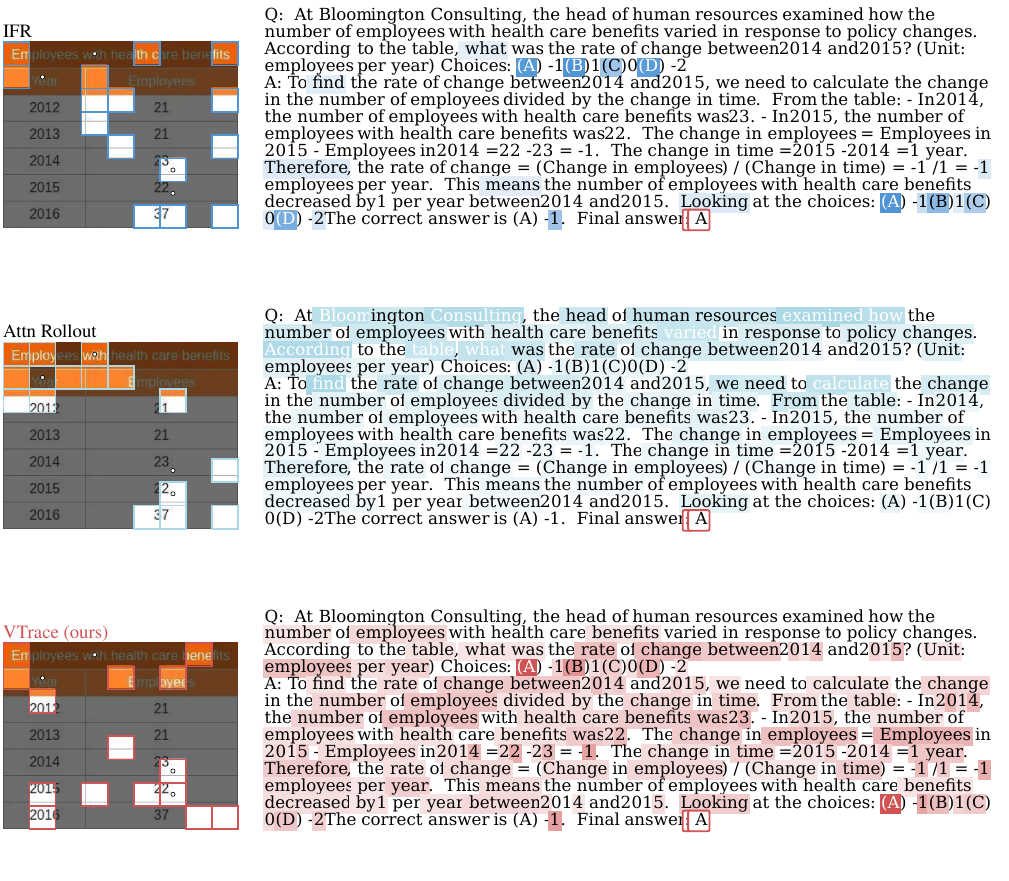}
    \caption{\textbf{Reading values from a table.} The model reads that the count fell from 23 in 2014 to 22 in 2015 and answers (A) $-1$. \textsc{VTrace}'s top patches fall on the year column and the 2014 and 2015 rows, while Attention Rollout concentrates on the table's title bar. In the text, Attention Rollout highlights setup words such as ``Bloomington Consulting'' and ``calculate'', whereas \textsc{VTrace} highlights the years, the change in employees and the result $-1$.}
    \label{fig:casestudy_mmstar892}
\end{figure}

\begin{figure}
    \centering
    \includegraphics[width=1\linewidth]{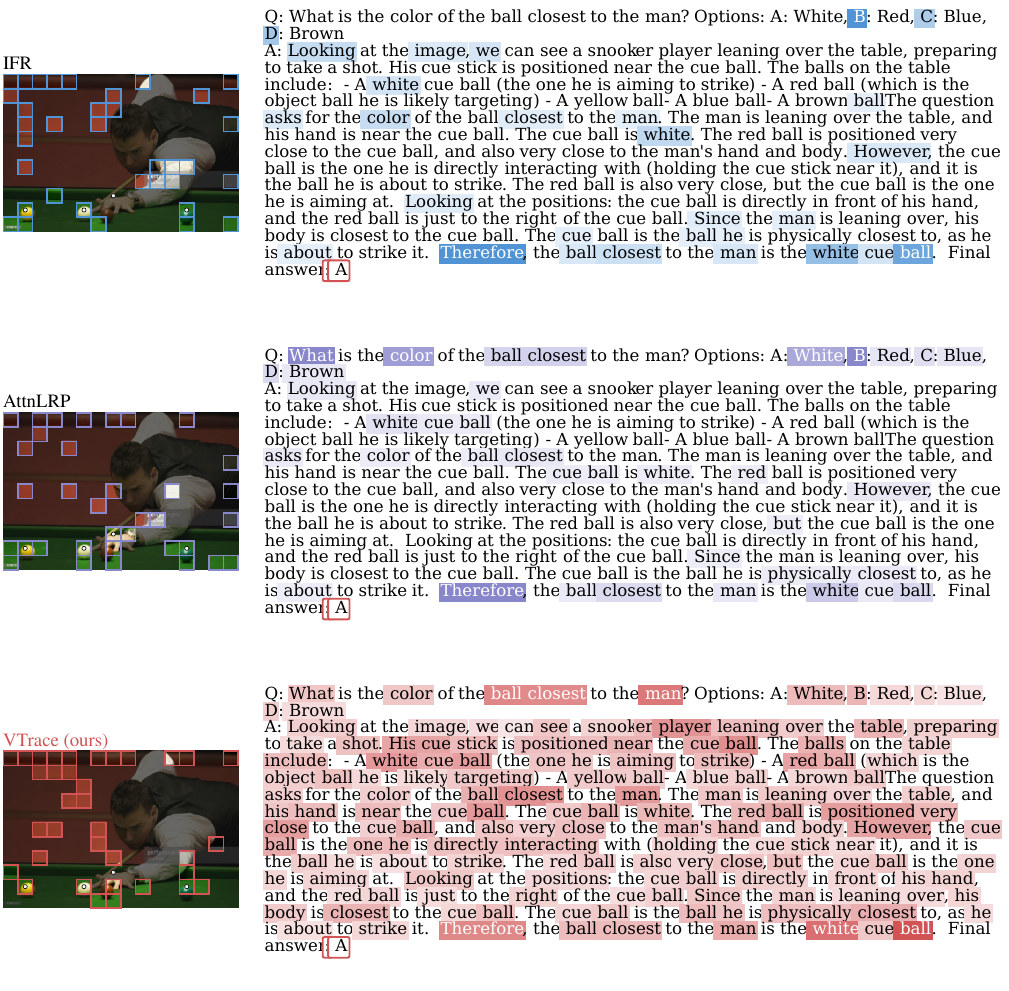}
    \caption{\textbf{A harder case: the ball closest to the man.} The model answers (A) White. In the text, \textsc{VTrace} highlights the key question words ``ball closest'' and ``man'' and the reasoning about the white cue ball. In the image, its top patches cover the balls on the table, including the white cue ball and the red ball at the man's hand, but many also fall on the dark background above the table.}
    \label{fig:casestudy_mmstar475}
\end{figure}

\end{document}